\newcommand{\T}{\intercal}
\newcommand{\dx}{\delta_x}
\newcommand{\ddx}{\dot{\delta}_x}
\newcommand\reallywidehat[1]{%
\savestack{\tmpbox}{\stretchto{%
  \scaleto{%
    \scalerel*[\widthof{\ensuremath{#1}}]{\kern-.6pt\bigwedge\kern-.6pt}%
    {\rule[-\textheight/2]{1ex}{\textheight}}%
  }{\textheight}%
}{0.5ex}}%
\stackon[1pt]{#1}{\tmpbox}%
}
\newtheorem{theorem}{Theorem}
\newtheorem{lemma}[theorem]{Lemma}
\newtheorem{proposition}[theorem]{Proposition}
\newcommand{\reals}{\mathbb{R}}
\newcommand{\X}{\mathcal{X}}
\newcommand{\U}{\mathcal{U}}
\newcommand{\rp}{\reals^{\geq 0}}
\newcommand{\toolname}{\textcolor{black}{C3M }}
\title{Learning Certified Control Using Contraction Metric}
\author{
  Dawei Sun\\
  SRI International and \\
  University of Illinois at Urbana-Champaign, US\\
  \texttt{daweis2@illinois.edu} \\
  \And
  Susmit Jha\\
  Computer Science Laboratory\\
  SRI International, US\\
  \texttt{susmit.jha@sri.com}
  \And
  Chuchu Fan\\
  Massachusetts Institute of Technology\\
  United States\\
  \texttt{chuchu@mit.edu} \\
}
\begin{document}

\maketitle

\begin{abstract}
In this paper, we solve the problem of finding a certified control policy that drives a robot from any given initial state and under any bounded disturbance to the desired reference trajectory, with guarantees on the convergence or bounds on the tracking error. Such a controller is crucial in safe motion planning. We leverage the advanced theory in Control Contraction Metric and design a learning framework based on neural networks to co-synthesize the contraction metric and the controller for control-affine systems. We further provide methods to validate the convergence and bounded error guarantees. We demonstrate the performance of our method using a suite of challenging robotic models, including models with learned dynamics as neural networks. We compare our approach with leading methods using sum-of-squares programming, reinforcement learning, and model predictive control. Results show that our methods indeed can handle a broader class of systems with less tracking error and faster execution speed. Code is available at \url{https://github.com/sundw2014/C3M}.
\end{abstract}

\keywords{Learning Tracking Controller, Control Contraction Metric, Convergence Guarantee, Neural Networks}

\section{Introduction}
Designing safe motion planning controllers for nonholonomic robotic systems is a critical yet extremely challenging problem. In general, motion planning for robots is notoriously difficult. 
For example, even planning for a robot (composed of $n$ polyhedral bodies)  in a 3D world that contains a fixed number of polyhedral obstacles is PSPACE-hard~\citep{lavalle2006planning}.
Simultaneously providing formal guarantees on safety and robustness when the robots are facing uncertainties and disturbances makes the problem even more challenging. Sample-based planning techniques~\citep{kavraki1998analysis,lavalle2006planning} can plan safe trajectories by exploring the environment but cannot handle uncertainty and disturbances. A natural thought that has been extensively explored is to exploit a separation of concerns that exists in the problem: (A) how to design a reference (also called expected or nominal) trajectory $x^*(t)$ to drive a robot to its goal safely without considering any uncertainty or disturbance, and (B) how to design a tracking controller to make sure the actual trajectories $x(t)$ of the system under disturbances can converge to $x^*(t)$ with guaranteed bounds for the tracking error between $x(t)$ and $x^*(t)$~\citep{herbert2017fastrack,vaskov2019towards,majumdar2017funnel,jha2018duality}. Combining controllers from solving (A) and (B) can make sure that the actual behaviors of the robot are all safe. 

While  the reference trajectories 
can be found using 
efficient planning techniques~\citep{kavraki1998analysis,lavalle2006planning,mpcPytorch} for a very broad class of systems, the synthesis of a guaranteed tracking controller, also called trajectory stabilization~\citep{singh2019learning}  is not an automatic process, especially for nonholonomic systems. Control theoretic techniques exist to guide the dual synthesis~\cite{jha2018duality} of control policies and certificates at the same time, where the certificates can make sure certain required properties are provably satisfied when the corresponding controller is run in (closed-loop) composition with the plant. For example, control Lyapunov Function (CLF)~\citep{chang2019neural, ravanbakhsh2016robust} ensures the existence of a controller so the controlled system is Lyapunov stable, and Control Barrier Function (CBF)~\citep{ames2014control,ames2019control,taylor2019learning} ensures that a controller exists so the controlled system always stays in certain safety invariant sets defined by the corresponding barrier function. Other methods pre-compute the tracking error through  reachability analysis~\citep{vaskov2019towards}, Funnels~\citep{tedrake2009lqr,majumdar2017funnel}, and Hamilton-Jacobi analysis~\citep{herbert2017fastrack,bansal2017safe}. 
Despite the benefit brought by 
these certificates (e.g. Lyapunov Function, Barrier Function), finding the correct function representation 
for the certificates is non-trivial. Various methods have been studied to learn the certificate as polynomials~\citep{kapinski2014simulation,ravanbakhsh2016robust}, support vectors~\citep{khansari2014learning}, 
Gaussian processes~\citep{jha2018data}, 
temporal logic formulae~\cite{jha2017telex,jha2018safe},
and neural networks (NN)~\citep{chang2019neural,taylor2019learning,taylor2019episodic,robey2020learning,choi2020reinforcement}. Unlike reinforcement learning (RL)~\cite{polydoros2017survey,ohnishi2019barrier,deisenroth2011pilco,chua2018deep} which focuses on learning a control policy that maximizes an accumulated reward and usually lacks formal safety guarantees, certificate-guided controller learning focuses on learning a sufficient condition for the desired property. In this paper, we follow this idea and learn a tracking controller by learning a convergence certificate simultaneously as guidance.

In this paper, we leverage recent advances in Control Contraction Metric (CCM) ~\citep{manchester2017control,manchester2015unifying} theory that extends the contraction theory to control-affine systems to prove the existence of a tracking controller so the closed-loop system is contracting. In the CCM theory~\citep{manchester2017control}, it has been shown that a valid CCM implies the existence of a contracting tracking controller for {\em any} reference trajectories. In our framework, we model both the controller and the metric function as neural networks and optimize them jointly with the cost functions inspired by the CCM theory. Due to the constraints imposed during training, the tracking error of the learned controller is guaranteed to be bounded even with external disturbances.

Synthesis of CCM certificate has been formulated as solving a Linear Matrix Inequality (LMI) problem using Sum-of-Squares (SoS) programming~\citep{singh2019robust} or RKHS theory~\cite{singh2019learning} even when the model dynamics is unknown. However, such LMI-based techniques have to rely on an assumption on the special control input structure of a class of underactuated systems (see Sec.~\ref{sec:controller_design} for details) and therefore cannot be applied to general robotic systems. Moreover, the above methods only learn the CCM. The controller needs to be found separately by computing geodesics of the CCM, which cannot be solved exactly and has high computational complexity.
To use SoS, the system dynamics need to be polynomial equations or can be approximated as polynomial equations.
The degree and template of the polynomials in SoS play a crucial role in determining whether a solution exists and need to be tuned for each system. 
In~\citep{tsukamoto2020neural}, the authors proposed a synthesis framework using recurrent NNs to model the contraction metric and this framework works for nonlinear systems which are a convex combination of multiple state-dependent coefficients (i.e. $f(x,t)$ is written as $A(x,t)x$). Again, the controller in~\citep{tsukamoto2020neural} is constructed from the learned metric. 
In contrast, our approach can simultaneously synthesize the controller and the CCM certificate for control-affine systems without any additional structural assumptions.

We provide two methods to prove convergence guarantees of the learned controller and CCM (Section~\ref{sec:guarantees}). The first method provides deterministic guarantees by leveraging the Lipschitz continuity of the CCM’s condition.
Furthermore, we observed that even if the CCM’s condition does not hold globally for every state in the state space, the resulting trajectories still often converge to the reference trajectories in our experience. This motivates our second approach based on conformal regression
to give probabilistic guarantees on the convergence of the tracking error. Both methods can provide upper bounds on the tracking error, using which one can explore trajectory planning methods such as sampling-based methods (e.g. RRT~\citep{lavalle1998rapidly}, PRM~\citep{kavraki1998analysis}), model predictive control (MPC)~\citep{mpcPytorch}, and satisfiability-based methods~\citep{fan2020fast} to find safe reference trajectories to accomplish the safe motion planning missions. We compare our approach Certified Control using Contraction Metric (C3M) with the SoS programming~\citep{singh2019robust}, RL~\citep{schulman2017proximal}, and MPC~\citep{mpcPytorch,camacho2013model,anderson2010optimal} on several representative robotic systems, including the ones whose dynamics are learned from physical data using neural networks. We show that \toolname outperforms other methods by being the only approach that can find converging tracking controllers for all the benchmarks. We provide two metrics for evaluating the tracking performance and show that controllers achieved through \toolname have a much smaller tracking error. In addition,  controllers found by \toolname can be executed in sub-milliseconds at each step and is much faster than 
methods that only learn the metric (e.g. SoS programming) and online control methods (e.g.  MPC).

\section{Preliminaries and notations}
\label{sec:prelim}
We denote by $\reals$ and $\rp$ the set of real and non-negative real numbers respectively. For a symmetric matrix $A \in \reals^{n\times n}$, the notation $A \succ 0$ ($A \prec 0$) means $A$ is positive (negative) definite. The set of positive definite $n \times n$ matrices is denoted by $S_n^{>0}$. For a matrix-valued function $M(x):\reals^n \mapsto \reals^{n \times n}$, its element-wise Lie derivative along 
a vector $v \in \reals^{n}$ is $\partial_v M := \sum_{i} v^{i} \frac{\partial M}{\partial x^i}$. Unless otherwise stated, $x^i$ denotes the $i$-th element of vector $x$. For $A \in \reals^{n \times n}$, we denote $A + A^\T$ by $\hat{A}$.

\paragraph{Dynamical system}
We consider {\em control-affine} systems of the form
\begin{equation}
    \dot{x}(t) = f(x(t)) + B(x(t)) u(t) + d(t),
    \label{eq:sys}
\end{equation}
where $x(t) \in \X \subseteq \reals^n$, $u(t) \in \mathcal{U} \subseteq \reals^m$, and $d(t) \in \mathcal{D} \subseteq \reals^n$ for all $t \in \rp$ are states, inputs, and disturbances respectively. Here, $\X, \mathcal{U}$ and $\mathcal{D}$ are compact sets that represent the state, input, and disturbance space respectively. We assume that $f: \reals^n \mapsto \reals^n$ and $B: \reals^n \mapsto \reals^{n \times m}$ are smooth functions,  the control input $u : \rp \mapsto \mathcal{U}$ is a piece-wise continuous function, and the right-side of Eq.~\eqref{eq:sys} holds at discontinuities. Furthermore, we assume that the disturbance is bounded, i.e. $\sup_t \|d(t)\|_2 < \infty$. Given an input signal $: \rp \mapsto \mathcal{U}$, a disturbance signal $d:\rp \mapsto \mathcal{D}$, and an initial state $x_0 \in \X$, a \textit{(state) trajectory} of the system is a function $x: \rp\mapsto \reals^{n}$ such that $x(t),u(t),d(t)$ satisfy Eq.~\eqref{eq:sys} and $x(0) = x_0$. The goal of this paper is to design a tracking controller $u(\cdot)$ such that the controlled trajectory $x(t)$ can track \textit{any} target trajectory $x^*(t)$ generated by a reference control $u^*(t)$ when $x(0)$ is in a neighborhood of $x^{*}(0)$ and there is no disturbance (i.e. $d(t) = 0$). Furthermore, when $d(t) \neq 0$, the tracking error can also be bounded (Sec.~\ref{sec:robustness}). Fig.~\ref{fig:arch}~(a) illustrates the reference trajectory and the actual trajectory controlled by the proposed method.

\paragraph{Control contraction metric theory}
Contraction theory~\citep{lohmiller1998contraction} analyzes the incremental stability of a system by considering the evolution of the distance between any pair of arbitrarily close neighboring trajectories. Let us first consider a time-invariant autonomous system of the form $\dot{x} = f(x)$. Given a pair of neighboring trajectories, denote the infinitesimal displacement between them by $\dx$, which is also called a \textit{virtual displacement}. The evolution of $\dx$ is dominated by a linear time varying (LTV) system: $\ddx = \frac{\partial f}{\partial x}(x) \dx.$ Thus, the dynamics of the squared distance $\dx ^\T \dx$ is given by $\frac{d}{dt} (\dx ^ \T \dx) = 2 \dx^\T \ddx = 2 \dx^\T \frac{\partial f}{\partial x} \dx.$
If the symmetrical part of the Jacobian $\frac{\partial f}{\partial x}$ is uniformly negative definite, i.e. there exists a constant $\lambda > 0$ such that for all $x$, $\frac{1}{2}\reallywidehat{\frac{\partial f}{\partial x}} \preceq -\lambda \mathbf{I}$, then $\dx^\T \dx$ converges to zero exponentially at rate $2 \lambda$. Hence, all trajectories of this system will converge to a %
{common}
trajectory \citep{lohmiller1998contraction}. Such a system is referred to be \textit{contracting}.

The above analysis can be generalized by introducing a \textit{contraction metric} $M : \reals^n \mapsto S_n^{>0}$, which is a smooth function. Then, $\dx^\T M(x) \dx$ can be interpreted as a Riemannian squared length. Since $M(x) \succ 0$ for all $x$, if $\dx^\T M(x) \dx$ converges to $0$ exponentially, then the system is contracting. The converse is also true. 
As shown in 
\citep{lohmiller1998contraction}, if a system is contracting, then there exists a contraction metric $M(x)$ and a constant $\lambda > 0$ such that $\frac{d}{dt}(\dx^\T M(x) \dx) < - \lambda \dx^\T M(x) \dx$ for all $x$ and $\dx$.

Contraction theory can be further extended to control-affine systems. First, let us ignore the disturbances in system~\eqref{eq:sys}, then the dynamics of the corresponding virtual system is given by $\ddx = A(x,u) \dx + B(x) \delta_u$, where $A(x,u) := \frac{\partial f}{\partial x} + \sum_{i=1}^{m} u^i \frac{\partial b_i}{\partial x}$,  $b_i$ is the $i$-th column of $B$,  $u^i$ is the $i$-th element of $u$, and $\delta_u$ is the infinitesimal difference of $u$. A fundamental theorem in Control Contraction Metric (CCM) theory~\cite{manchester2017control} says if there exists a metric $M(x)$ such that the following conditions hold for all $x$ and some $\lambda > 0$,
\begin{align}
B_{\bot}^\T \left( - \partial_f W(x) + \reallywidehat{\frac{\partial f(x)}{\partial x} W(x)} + 2 \lambda W(x) \right)B_{\bot} \prec 0, \label{eq:c1} \\
B_{\bot}^\T \left( \partial_{b_j} W(x) - \reallywidehat{\frac{\partial b_j(x)}{\partial x} W(x)} \right) B_{\bot} = 0,\ j=1,\dots,m,
\label{eq:c2}
\end{align}

where $B_{\bot}(x)$ is an annihilator matrix of $B(x)$ satisfying $B_{\bot}^\T B = 0$, and $W(x) = M(x)^{-1}$ is the dual metric, then there exists a tracking controller $k(x, x^*, u^*)$ such that the closed-loop system controlled by $u = k(x, x^*, u^*)$ is contracting with rate $\lambda$ under metric $M(x)$~\citep{manchester2017control}, which means $x(t)$ converges to $x^*(t)$ exponentially. However, as shown in~\citep{manchester2017control}, either finding a valid metric or constructing a controller from a given metric is not straightforward. Hence, we proposed to jointly learn the metric and controller using neural networks.

\section{Learning tracking controllers using CCM with formal guarantees}
\begin{figure}
    \centering
\includegraphics[width=\textwidth,trim=70 70 110 60,clip]{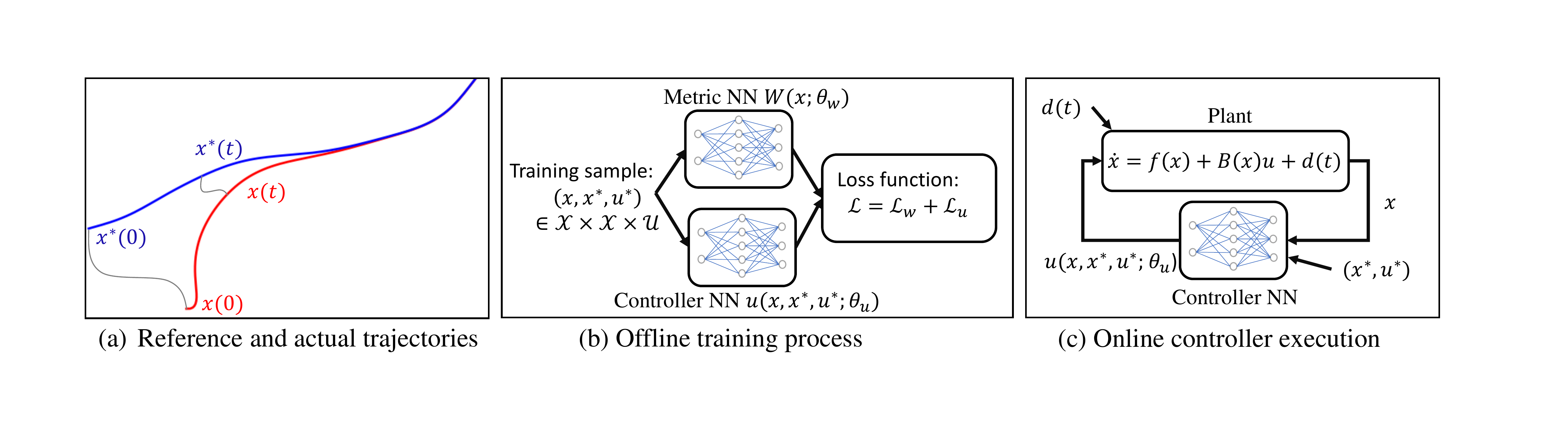}
    \caption{\footnotesize Components of the proposed method. (a) Trajectories of the Dubins car model, which is a classical nonholonomic system.
Here, $x^*$ is the reference trajectory, and $x$ is the actual trajectory controlled by our learned controller. (b)Joint training for the CCM and the controller. (c) Illustration on the online execution.}
    \label{fig:arch}
\end{figure}
In this paper, we utilize machine learning methods to jointly learn a Contraction Metric and a tracking controller for System~\eqref{eq:sys} with {\em known} dynamics.
The overall learning system is illustrated in Fig.~\ref{fig:arch} (b).
Different from RL-based methods, since all the conditions used for learning (Eq.~\eqref{eq:c1},\eqref{eq:c2}, and \eqref{eq:c3}) are defined for a single state at a specific time instant instead of the whole trajectory, running of the system is not required for training. Hence, the training is fully offline and fast.
The data set for training is of the form $\{(x_i, x^*_i, u^*_i) \in \X \times \X \times \U \}_{i=1}^{N}$, and samples are drawn independently. 
The contraction metric and the tracking controller are parameterized using neural networks. In what follows we first ignore the disturbance $d(t)$ and then show in Sec.~\ref{sec:robustness} that with $d(t)\neq 0$, the tracking error of the learned controller can still be bounded.

\subsection{Controller and metric learning}\label{sec:controller_design}
The controller $u(x, x^*, u^*; \theta_u)$ is a neural network with parameters $\theta_u$, and the dual metric $W(x;\theta_w)$ is also a neural network with parameters $\theta_w$. By design, the controller satisfies that $x = x^* \Rightarrow u(x, x^*, u^*; \theta_u) = u^*$ for all $\theta_u$, and $W(x;\theta_w)$ is a symmetric matrix satisfying $W(x;\theta_w) \succeq \underline{w}\mathbf{I}$ for all $x$ and all $\theta_w$, where $\underline{w}$ is a hyper-parameter. More details about the design can be found in Sec.~\ref{sec:exps}.
Plugging $u(x, x^*, u^*; \theta_u)$ into System~\eqref{eq:sys} with $d(t) = 0$, the dynamics of the generalized squared length of the virtual displacement under metric $M(x;\theta_w) = W(x;\theta_w)^{-1}$ is given by $\frac{d}{dt}(\dx^\T M(x;\theta_w) \dx) = \dx^\T(\dot{M} + \reallywidehat{M (A+BK)}) \dx$, where $K = \frac{\partial u}{\partial x}$, and $\dot{M} = \sum_{i=1}^{n} \frac{\partial M}{\partial x^i} \dot{x}^i$, or using the Lie derivative notation $\dot{M} = \partial_{f(x)+B(x)u} M$. A sufficient condition from~\cite{manchester2017control} for the closed-loop system being contracting is that there exists $\lambda>0$ such that for all $(x, x^*, u^*) \in \X\times\X\times\U$,
\begin{equation}
\dot{M} + \reallywidehat{M (A+BK)} + 2 \lambda M \prec 0.
\label{eq:c3}
\end{equation}
Since $x = x^* \Rightarrow u(x, x^*, u^*; \theta_u) = u^*$, the reference $(x^*(t), u^*(t))$ is a valid trajectory of the closed-loop system. If the system is furthermore contracting, then starting from any initial state, the trajectory converges to a common trajectory which is indeed $x^*(t)$ as formally stated below.
\begin{proposition}
\label{prop:contracting}
If condition~\eqref{eq:c3} holds and $\overline{m} \geq \underline{m} > 0$ satisfying $\underline{m}\mathbf{I} \preceq M(x) \preceq \overline{m}\mathbf{I}$ for all $x$, then the initial difference between the reference and the actual trajectory is exponentially forgotten, i.e.
$
\|x(t) - x^*(t)\|_2 \leq \sqrt{{\overline{m}}/{\underline{m}}}~ e^{- \lambda t}\|x(0)-x^*(0)\|_2.
$
\end{proposition}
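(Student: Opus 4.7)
The plan is to use the standard contraction-theoretic bootstrap from infinitesimal to finite distances, applied to the closed-loop system $\dot{x} = f(x) + B(x)\, u(x,x^*,u^*;\theta_u)$. The key observation is that because $x = x^*$ forces $u = u^*$, the reference $x^*(t)$ is itself an integral curve of the closed-loop system, so both $x^*(t)$ and $x(t)$ are solutions to the same (time-varying) ODE; comparing them is therefore a question about how two neighbouring trajectories of one closed-loop system evolve.

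First I would introduce a smooth one-parameter family of initial conditions $x_s(0)$, $s\in[0,1]$, tracing out a minimizing geodesic from $x^*(0)$ to $x(0)$ under the Riemannian metric defined by $M$, and then flow each $x_s(0)$ forward under the closed-loop dynamics to obtain $x_s(t)$. Differentiating in $s$ gives the virtual displacement $\delta_x(s,t) = \partial x_s(t)/\partial s$, which satisfies the linearized closed-loop system $\dot{\delta}_x = (A + BK)\delta_x$ with $K = \partial u/\partial x$. By the product rule, $\tfrac{d}{dt}(\delta_x^\T M \delta_x) = \delta_x^\T (\dot{M} + \reallywidehat{M(A+BK)})\delta_x$, which is strictly less than $-2\lambda\, \delta_x^\T M \delta_x$ by hypothesis~\eqref{eq:c3}. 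Grönwall then yields $\delta_x(s,t)^\T M(x_s(t))\delta_x(s,t) \leq e^{-2\lambda t}\,\delta_x(s,0)^\T M(x_s(0))\delta_x(s,0)$ for every $s$.

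Next I would integrate the pointwise bound along $s$ to pass from infinitesimal to finite Riemannian length. Defining the Riemannian length of the curve $s \mapsto x_s(t)$ as $L(t) = \int_0^1 \sqrt{\delta_x(s,t)^\T M(x_s(t))\delta_x(s,t)}\,ds$, taking square roots and integrating gives $L(t) \leq e^{-\lambda t} L(0)$. Since the curve at $t=0$ was chosen to be a minimizing geodesic, $L(0)$ equals the Riemannian distance $d_M(x(0), x^*(0))$; at time $t>0$, $L(t)$ upper-bounds $d_M(x(t), x^*(t))$. Finally the sandwich $\underline{m}\mathbf{I} \preceq M \preceq \overline{m}\mathbf{I}$ converts Riemannian to Euclidean distances via $\sqrt{\underline{m}}\,\|a-b\|_2 \leq d_M(a,b) \leq \sqrt{\overline{m}}\,\|a-b\|_2$, and combining these inequalities produces exactly the stated bound with prefactor $\sqrt{\overline{m}/\underline{m}}$.

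The main obstacle is not any single calculation but rather justifying the geodesic/variation construction cleanly: one needs the existence of a smooth minimizing geodesic between $x^*(0)$ and $x(0)$ under $M$ (which requires $x(0)$ to lie in a geodesically convex neighbourhood of $x^*(0)$, or at least in the domain where a minimizing geodesic exists), and one needs the flow of the closed-loop ODE to be smooth enough in initial condition to differentiate under the integral. Both follow from smoothness of $f$, $B$, $u$, and positive-definiteness of $M$, but should be stated explicitly; with those in hand, the rest is a mechanical application of Grönwall's inequality and the metric equivalence.
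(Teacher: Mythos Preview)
Your argument is correct and follows the standard contraction-theory route. The paper does not prove Proposition~\ref{prop:contracting} separately; the same mechanism appears in the Appendix proof of Theorem~\ref{thm:robustness} (take $\epsilon=0$), which works with the Riemannian geodesic length $R(t)$, obtains $\dot R\le-\lambda R$ and applies the comparison lemma, and then converts to the Euclidean norm through the sandwich $\underline m\mathbf I\preceq M\preceq\overline m\mathbf I$---exactly your mechanism, with the cosmetic difference that you flow a fixed initial geodesic forward whereas the paper tracks the geodesic distance at each $t$.
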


Denoting the LHS of Eq.\eqref{eq:c3} by $C_u(x, x^*, u^*; \theta_w, \theta_u)$ and the uniform distribution over the data space $\mathcal{S} := \X\times\X\times\U$ by $\rho(\mathcal{S})$, then the \textit{contraction risk} of the system is defined as
\begin{equation}
\mathcal{L}_u(\theta_w, \theta_u) = \mathop{\mathbb{E}}\limits_{(x,x^*,u^*) \sim \rho(\mathcal{S})}{L_{PD}(-C_u(x, x^*, u^*;\theta_w, \theta_u))},
\label{eq:loss_controller}
\end{equation}
where $L_{PD}(\cdot) \geq 0$ is for penalizing non-positive definiteness, and $L_{PD}(A)=0$ iff. $A \succeq 0$. Obviously, $\mathcal{L}_u(\theta_w^*, \theta_u^*) = 0$ implies that $u(x,x^*,u^*;\theta_u^*)$ and $W(x;\theta_w^*)$ satisfy inequality~\eqref{eq:c3} exactly.

As shown in Proposition~\ref{prop:contracting}, the overshoot of the tracking error is affected by the condition number $\overline{m}/\underline{m}$ of the metric. Since the smallest eigenvalue of the dual metric is lower bounded by $\underline{w}$ by design, penalizing large condition numbers is equivalent to penalizing the largest eigenvalues, and thus the following risk function is used
\begin{equation}
\mathcal{L}_c(\theta_w) = \mathop{\mathbb{E}}\limits_{(x,x^*,u^*) \sim \rho(\mathcal{S})}{L_{PD}(\overline{w}\mathbf{I} - W(x;\theta_w))},
\label{eq:loss_condition_number}
\end{equation}
where $\overline{w}$ is a hyper-parameter.

However, jointly learning a metric and a controller by minimizing $\mathcal{L}_u$ solely is hard and leads to poor results. Inspired by the CCM theory, we add two auxiliary loss terms for the dual metric to mitigate the difficulty of minimizing $\mathcal{L}_u$. As shown in Sec.~\ref{sec:prelim}, conditions~\eqref{eq:c1} and~\eqref{eq:c2} are sufficient for a metric to be a valid CCM. Intuitively, imposing these constraints on the dual metric $W(x;\theta_w)$ provides more guidance for optimization. More discussion can be found in Appendix. Denoting the LHS of Eq.~\eqref{eq:c1} and~\eqref{eq:c2} by $C_1(x; \theta_w)$ and $\{C_2^j(x; \theta_w)\}_{j=1}^{m}$, the following risk functions are used
\begin{equation}\label{eq:loss_metric}
\mathcal{L}_{w1}(\theta_w) = \mathop{\mathbb{E}}\limits_{(x,x^*,u^*) \sim \rho(\mathcal{S})} L_{PD}(-C_1(x;\theta_w));
\mathcal{L}_{w2}(\theta_w) = \sum_{j=1}^{m} \mathop{\mathbb{E}}\limits_{(x,x^*,u^*) \sim \rho(\mathcal{S})} \|C_2^i(x;\theta_w)\|_F,
\end{equation}
where $\|\cdot\|_F$ is the Frobenius norm.

Please note that in some cases, condition \eqref{eq:c2} can be automatically satisfied for all $\theta_w$ by designing $W(\cdot)$ appropriately. As stated in~\citep{singh2019learning}, if the matrix $B(x)$ is sparse and of the form
    $B(x) = \begin{bmatrix}0_{(n-m) \times m}\\ b(x)\end{bmatrix}$,
where $b(x)$ is an invertible matrix, condition~\eqref{eq:c2} can be automatically satisfied for all $x$ and all $\theta_w$ by 
making the upper-left $(n-m) \times (n-m)$ block of $W(x; \theta_w)$ not a function of the last $(n-m)$ elements of $x$.
In~\cite{singh2019learning,singh2019robust}, such sparsity assumptions are necessary for their approach to work.
For dynamical models satisfying this assumption, we make use of this property by designing $W$ to satisfy such a structure and eliminate $\mathcal{L}_{w2}$ since it will always be $0$. For models not satisfying this assumption, we will show in Sec.~\ref{sec:exps} the impact of the loss term $\mathcal{L}_{w2}$.

In order to train the neural network using sampled data, the following \textit{empirical risk} function is used
\begin{multline}
\mathcal{L}(\theta_u,\theta_w) = \frac{1}{N} \sum_{i=1}^{N} \Big[ L_{PD}(-C_u(x_i, x^*_i, u^*_i;\theta_w, \theta_u)) + L_{PD}(-C_1(x_i;\theta_w)) \\+ \sum_{j=1}^{m} \|C_2^j(x_i;\theta_w))\|_F + L_{PD}(\overline{w}\mathbf{I} - W(x_i;\theta_w)) \Big],
\end{multline}
where $\{(x_i, x^*_i, u^*_i)\}_{i=1}^{N}$ are drawn independently from $\rho(\mathcal{S})$, and $L_{PD}$ is implemented as follows. Given a matrix $A \in \reals^{n \times n}$, we randomly sample $K$ points $\{p_i \in \reals^n~|~||p_i||_2 = 1\}_{i=1}^{K}$. Then, the loss function is calculated as $L_{PD}(A) = \frac{1}{K} \sum_{i=1}^{K} \min\{0, -p_i ^\T A p_i\}$.

\subsection{Theoretical convergence guarantees}\label{sec:guarantees}
As discussed in Sec.~\ref{sec:controller_design}, satisfaction of inequality \eqref{eq:c3} for all $(x,x^*,u^*) \in \mathcal{S}$ is a strong guarantee to show the validity of $M(x)$ as a contraction metric and therefore provide convergence guarantees. SMT solvers~\cite{chang2019neural} are valid tools for verifying the satisfaction of \eqref{eq:c3} on the uncountable set $\mathcal{S}$. However, SMT solvers for NN have poor scalability and cannot handle the neural networks used in our experiments. Moreover, we observed in experiments that even if inequality~\eqref{eq:c3} does not hold for all $(x,x^*,u^*) \in \mathcal{S}$, the learned controller still has perfect performance. Therefore, we introduce two approaches to provide guarantees on the satisfaction of inequality \eqref{eq:c3} or on the tracking error directly, which in practice are easier to validate and still give the needed level of assurance. 

\paragraph{Deterministic guarantees using Lipschitz continuity.}\label{sec:DG}
For a Lipschitz continuous function $f: \X \mapsto \reals$ with Lipschitz constant $L_f$, discretizing the domain $\X$ such that the distance between any grid point and its nearest neighbor is less than $\tau$, if $f(x_i) < -L_f \tau$ holds for all grid point $x_i$, then $f(x) < 0$ holds for all $x \in \X$. We show in the following proposition that the largest eigenvalue of the LHS of inequality~\eqref{eq:c3} indeed has a Lipschitz constant if both the dynamics and the learned controller and metric have Lipschitz constants.

\begin{proposition}\label{thm:lconstant}
Let $A$, $B$, $K$, and $M$ be functions of $x$, $x^*$, and $u^*$.
If $\dot{M}$, $ M$, $A$, $B$, and $K$ all have Lipschitz constants $L_{\dot{M}}, L_M$, $L_A$, $L_B$, and $L_K$ respectively, and $2$-norms of the last four are all bounded by $S_M$, $S_A$, $S_B$, and $S_K$ respectively, then the largest eigenvalue function
$\lambda_{\max}\left( \dot{M} + \reallywidehat{M(A+BK)} + 2 \lambda M \right)$ has a Lipschitz constant $L_{\dot{M}} + 2 \left(S_ML_A + S_A L_M + S_MS_BL_K +  S_BS_KL_M + S_M S_KL_B + \lambda L_M \right)$.

\end{proposition}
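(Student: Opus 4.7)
The plan is to decompose the matrix-valued map
\[
F(x,x^*,u^*) \;:=\; \dot{M} \;+\; \reallywidehat{M(A+BK)} \;+\; 2\lambda M
\]
into its three summands, bound each summand's Lipschitz constant in the spectral (operator 2-) norm, and then transfer that bound to $\lambda_{\max}(F)$ via Weyl's inequality. Note that $M$ (hence $\dot{M}$ and $\lambda M$) is symmetric and $\reallywidehat{M(A+BK)}$ is symmetric by construction, so $F$ is symmetric-matrix-valued and $\lambda_{\max}(F)$ is well defined.

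First I would record two standard facts. (i) For matrix-valued maps $P,Q$ with operator-norm bounds $S_P, S_Q$ and Lipschitz constants $L_P, L_Q$, the product $PQ$ is Lipschitz with constant $S_PL_Q + L_PS_Q$; iterating once, the triple product $PQR$ is Lipschitz with constant
\[
L_P S_Q S_R \;+\; S_P L_Q S_R \;+\; S_P S_Q L_R.
\]
(ii) The symmetrization $X \mapsto X+X^\T$ is $2$-Lipschitz in the operator norm, since $\|X+X^\T - Y - Y^\T\|_2 \le 2\|X-Y\|_2$.

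Next I would apply these pieces by pieces. The summand $\dot{M}$ contributes $L_{\dot{M}}$. For the middle summand, I first bound $M(A+BK) = MA + MBK$: by fact (i), $MA$ has Lipschitz constant $S_ML_A + S_AL_M$; by the triple-product rule applied to $MBK$, that term has Lipschitz constant $L_M S_B S_K + S_M L_B S_K + S_M S_B L_K$. Summing, $M(A+BK)$ has Lipschitz constant
\[
S_ML_A + S_AL_M + S_BS_KL_M + S_MS_KL_B + S_MS_BL_K,
\]
and applying fact (ii) multiplies this by $2$. The summand $2\lambda M$ contributes $2\lambda L_M$. Adding the three contributions yields exactly
\[
L_{\dot{M}} \;+\; 2\bigl(S_ML_A + S_AL_M + S_MS_BL_K + S_BS_KL_M + S_MS_KL_B + \lambda L_M\bigr),
\]
which is the claimed Lipschitz constant for $F$ under the operator norm.

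Finally, I invoke Weyl's inequality: for any two symmetric matrices $X, Y$, $|\lambda_{\max}(X)-\lambda_{\max}(Y)| \le \|X-Y\|_2$. Applied to $F$ at two arguments $s = (x,x^*,u^*)$ and $s'$, this yields $|\lambda_{\max}(F(s)) - \lambda_{\max}(F(s'))| \le \|F(s)-F(s')\|_2 \le L_F \|s-s'\|$, so $\lambda_{\max} \circ F$ inherits the same Lipschitz constant. I do not anticipate a serious obstacle here; the argument is a careful bookkeeping exercise. The one place to be vigilant is applying the triple-product rule correctly to $MBK$ and ensuring that the factor of $2$ from symmetrization is attached only to the $\reallywidehat{\cdot}$ summand, not to $\dot{M}$ or $2\lambda M$.
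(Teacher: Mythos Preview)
Your proposal is correct and follows essentially the same approach as the paper's own proof: the paper records the eigenvalue perturbation bound $|\lambda_{\max}(A)-\lambda_{\max}(B)|\le\|A-B\|_2$ (Lemma~\ref{lemma:eigdiff}) and the product Lipschitz rule (Lemma~\ref{lemma:Lmult}), then splits $F$ into the same summands and applies the triangle inequality and these lemmas, arriving at the identical constant. Your explicit iteration of the product rule to the triple product $MBK$ and the observation that symmetrization is $2$-Lipschitz are exactly the bookkeeping steps the paper's final display performs in one line.
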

The proof of Proposition~\ref{thm:lconstant} is provided in Appendix. Combining discrete samples and the Lipschitz constant from Proposition~\ref{thm:lconstant} can guarantee the strict satisfaction of inequality~\eqref{eq:c3} on the uncountable set $\mathcal{S}$.
The  Lipschitz constants for the metric and controller NNs can be computed using various existing tools such as the methods in~\citep{fazlyab2019efficient}.
An example demonstrating the verification process is given in Appendix. Since the estimate of the Lipschitz constant is too conservative, it usually requires a huge number of samples to verify the learned controller. Fortunately, this process only requires forward computation of the NN and thus can be done relatively fast.

\paragraph{Probabilistic guarantees.}\label{sec:PG}
We observed that even if inequality~\eqref{eq:c3} does not hold for all points in $\mathcal{S}$, the learned controller can still drive all simulated trajectories to converge to reference trajectories. 
This motivates us to derive probabilistic guarantees on the convergence as an alternative in the trajectory space. The probabilistic guarantee is derived based on a simple result from conformal prediction~\citep{vovk2005algorithmic}.
Let us consider the process of evaluating a tracking controller using a \textit{quality metric function}. Given an \textit{evaluation configuration} including the initial state $x(0)$, and the initial reference state $x^{*}(0)$ and reference control signal $u^{*}(t)$, we can get the actual trajectory $x(t)$ controlled by the tracking controller. The quality metric function summarizes the tracking error curve $\|x^{*}(t) - x(t)\|_2$ into a scalar which can be viewed as a score for the tracking. Now, it is clear that the quality metric function is a mapping from the evaluation configuration to a score. The following proposition gives a probabilistic guarantee on the distribution of the quality score based on empirical observations.

\begin{proposition}\label{prop:quantile}
Given a set of $n$ i.i.d. evaluation configurations, let $\{m_i\}_{i=1}^{n}$ be the quality scores for each configuration. Then, for a new i.i.d. evaluation configuration and the corresponding quality score $m_{n+1}$, we have
$\Pr(m_{n+1} \geq q_{1-\alpha}) \leq 1 - \alpha, $
where $q_{1-\alpha}$ is the $(1-\alpha)$-th quantile of $\{m_i\}_{i=1}^{n}$.
\end{proposition}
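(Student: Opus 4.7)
My plan is to invoke the standard exchangeability argument underlying conformal prediction. Since the $n+1$ evaluation configurations are i.i.d., the induced quality scores $m_1,\ldots,m_{n+1}$ form an i.i.d. (and hence exchangeable) sequence. I will assume the distribution of $m_i$ is continuous so that ties occur with probability zero; otherwise a standard uniform-random tie-breaking rule can be layered on top of the argument without affecting the conclusion.

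First, I would let $R$ denote the rank of $m_{n+1}$ inside the pooled set $\{m_1,\ldots,m_{n+1}\}$. By exchangeability, any permutation of the indices is equally likely, so $R$ is uniformly distributed over $\{1,2,\ldots,n+1\}$. Second, I would translate the event $\{m_{n+1}\geq q_{1-\alpha}\}$ into a statement about $R$. Writing $k := \lceil (1-\alpha)n\rceil$ so that $q_{1-\alpha}=m_{(k)}$ is the $k$-th order statistic of the first $n$ scores, a short case split on whether $R\le k$ or $R\ge k+1$ shows that removing $m_{n+1}$ shifts the indexing of later order statistics by exactly one position; consequently $m_{n+1}>m_{(k)}$ is equivalent to $R\geq k+1$. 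Third, I would compute
\begin{equation*}
\Pr(m_{n+1}\geq q_{1-\alpha}) \;=\; \Pr(R\geq k+1) \;=\; \frac{n+1-k}{n+1},
\end{equation*}
and verify that this fraction is bounded by $1-\alpha$ using $k=\lceil(1-\alpha)n\rceil$. The statement then follows immediately.

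The main obstacle, to my mind, is purely bookkeeping around the quantile convention rather than anything substantive. Depending on whether $q_{1-\alpha}$ is defined as the $\lceil(1-\alpha)n\rceil$-th or $\lceil(1-\alpha)(n+1)\rceil$-th order statistic, the final numeric comparison between $(n+1-k)/(n+1)$ and the target $1-\alpha$ needs to be tuned; the standard conformal choice $k=\lceil(1-\alpha)(n+1)\rceil$ makes the inequality tight. A secondary, minor obstacle is handling the case of a discrete or mixed distribution over quality scores, where ties in the pooled sample can occur with positive probability; I would dispatch this by the usual randomized tie-breaking, since with probability one no two perturbed scores coincide and the uniform-rank argument above applies verbatim.
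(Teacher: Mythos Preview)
Your proposal is correct and follows essentially the same approach as the paper: both rest on the observation that, by exchangeability of the i.i.d.\ scores, the rank of $m_{n+1}$ among $\{m_i\}_{i=1}^{n+1}$ is uniform on $\{1,\ldots,n+1\}$. The paper simply packages this as a cited lemma from \cite{lei2018distribution} (with the quantile convention $k=\lceil(1-\alpha)(n+1)\rceil$, which you correctly flag as the standard conformal choice), whereas you spell out the rank argument directly.
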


Note that Proposition~\ref{prop:quantile} asserts guarantees on the marginal distribution of $m_{n+1}$, which should be distinguished from the conditional distribution. Such probabilistic guarantees work on the trajectory space and therefore can be applied to any method that finds tracking controllers.
In Sec.~\ref{sec:exps}, we evaluate all the methods in comparison by computing the probabilistic guarantees using either the average tracking error or the convergence rate as the quality metric.

\subsection{Robustness of the learned controller}
\label{sec:robustness}
When the closed-loop system run with disturbances, the tracking error is still bounded as follows. 
\begin{theorem}\label{thm:robustness}
Given $M(x)$ and $u(x,x^*,u^*)$ satisfying inequality~\eqref{eq:c3}, since $M(x) \in S_n^{> 0}$, there exist $\overline{m} \geq \underline{m} > 0$ such that $\underline{m}\mathbf{I} \preceq M(x) \preceq \overline{m}\mathbf{I}$ for all $x$. Assume that the disturbance is uniformly bounded as $\|d(t)\|_2 \leq \epsilon$. Now, for the same reference, considering the trajectories $x_1(t)$ and $x_2(t)$ of the unperturbed and the perturbed closed-loop system respectively, the distance between these two trajectories can be bounded as $\|x_1(t) - x_2(t)\|_2 \leq \frac{R_0}{\sqrt{\underline{m}}} e^{-\lambda t} + \sqrt{\frac{\overline{m}}{\underline{m}}} \cdot \frac{\epsilon}{\lambda} (1 - e^{- \lambda t})$,
where $R_0 = \int_{x_1(0)}^{x_2(0)} \sqrt{\dx^\T M(x) \dx}$ is the geodesic distance between $x_1(0)$ and $x_2(0)$ under metric $M(x)$.
\end{theorem}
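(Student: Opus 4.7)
The plan is to adapt the standard contraction-theoretic comparison argument to handle the additive disturbance by introducing a one-parameter homotopy of trajectories. First, I would construct an auxiliary family $c(s,t)$, $s\in[0,1]$, that continuously interpolates between the unperturbed $x_1$ and the perturbed $x_2$. A convenient choice is the solution of
\begin{equation*}
\dot c(s,t) = f(c) + B(c)\,k(c, x^*(t), u^*(t)) + s\,d(t),
\end{equation*}
with $c(\cdot,0)$ initialized as the minimum-length geodesic from $x_1(0)$ to $x_2(0)$ under $M$. Uniqueness of ODE solutions gives $c(0,t)=x_1(t)$ and $c(1,t)=x_2(t)$, so that the Riemannian path length $L(t) := \int_0^1 \sqrt{\dx^\T M(c)\,\dx}\,ds$ of the curve $c(\cdot,t)$ at time $t$ upper-bounds the geodesic distance between $x_1(t)$ and $x_2(t)$, and satisfies $L(0)=R_0$ by construction.

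Next, I would compute the dynamics of the virtual displacement $\dx(s,t) := \partial c/\partial s$. Differentiating the defining ODE in $s$ and using the definitions of $A$ and $K$ from Section~\ref{sec:prelim} gives $\ddx = (A+BK)\,\dx + d(t)$: exactly the closed-loop virtual dynamics underlying \eqref{eq:c3}, plus an additive disturbance. Setting $V(s,t) := \dx^\T M(c)\,\dx$ and expanding $\dot V$, the contraction condition~\eqref{eq:c3} applied pointwise in $s$ and $t$ gives
\begin{equation*}
\dot V \leq -2\lambda V + 2\,\dx^\T M(c)\,d(t),
\end{equation*}
and a Cauchy--Schwarz estimate in the $M$-inner product, combined with $M\preceq\overline{m}\mathbf{I}$ and $\|d\|_2\leq\epsilon$, bounds the cross-term by $2\sqrt{V}\sqrt{\overline{m}}\,\epsilon$. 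Introducing $E := \sqrt{V}$ converts this into the linear pointwise inequality $\dot E \leq -\lambda E + \sqrt{\overline{m}}\,\epsilon$, and integrating over $s\in[0,1]$ yields the same inequality for $L(t)$.

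A direct Gr\"onwall/comparison argument then produces $L(t) \leq R_0\,e^{-\lambda t} + (\sqrt{\overline{m}}\,\epsilon/\lambda)(1-e^{-\lambda t})$. To convert this Riemannian-length bound into a Euclidean-distance bound, I would use $M\succeq\underline{m}\mathbf{I}$ to deduce $L(t)\geq \sqrt{\underline{m}}\,\|x_1(t)-x_2(t)\|_2$; dividing through delivers exactly the claimed estimate. I expect the main technical obstacle to be the bookkeeping in the virtual-dynamics derivation---both the dependence of $k$ on $c$ (which simultaneously produces the $A$ and the $BK$ terms through the chain rule) and the smoothness of the homotopy, which is needed to commute $\partial_s$ and $\partial_t$. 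Regularity of $f$, $B$, and the learned neural-network controller, together with boundedness of $d$, make these manipulations rigorous, but this is the step deserving the most care.
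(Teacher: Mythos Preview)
Your proposal is correct and follows essentially the same contraction-theoretic comparison argument as the paper: establish a differential inequality $\dot L \le -\lambda L + \sqrt{\overline{m}}\,\epsilon$ for a Riemannian length between the two trajectories, apply the comparison lemma, and convert to Euclidean distance via $M\succeq\underline{m}\mathbf{I}$. The paper's proof is a terse sketch that works directly with the geodesic distance $R(t)$ and defers the derivation of the differential inequality to Lohmiller--Slotine; your explicit homotopy $c(s,t)$ with the $s$-scaled disturbance and the tracked path length $L(t)\ge R(t)$ is precisely the construction that makes that citation rigorous, so the two arguments coincide in substance.
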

The proof of Theorem~\ref{thm:robustness} is inspired by Theorem 1 in~\cite{tsukamoto2020neural} and is provided in Appendix. In practice $\overline{m}$ and $\underline{m}$ can be computed using samples and the Lipschitz constant of the eigenvalues of $M(x)$ as discussed in Sec.~\ref{sec:guarantees}.

\section{Evaluation of performance}\label{sec:exps}

We evaluate the C3M approach on 9 representative case studies (5 of them are reported in Appendix), including high-dimensional (up to $9$ state variables and $3$ control variables) models and a system with learned dynamics represented by a neural network. All our experiments were conducted on a Linux workstation with two Xeon Silver 4110 CPUs, 32 GB RAM, and a Quadro P5000 GPU. Please note that the execution time reported in Tab.~\ref{tab:results} were all evaluated on CPU.

\textbf{Comparison methods.} We compare the performance of \toolname with $3$ different leading approaches for synthesizing tracking controllers, including both model-based and model-free methods. To be specific, the methods include
\begin{inparaenum}[(1.)]
    \item \textbf{SoS:} the SoS-based method proposed in \citep{singh2019robust}, using the official implementation~\footnote{\url{https://github.com/StanfordASL/RobustMP}}. 
    \item \textbf{MPC:} 
    an open-source implementation of MPC on PyTorch~\citep{mpcPytorch}, which solves nonlinear implicit MPC problems using the first-order approximation of the dynamics~\citep{tassa2014control}.
    \item \textbf{RL:} 
    the Proximal Policy Optimization (PPO) RL algorithm~\citep{schulman2017proximal} implemented in Tianshou~\footnote{\url{https://github.com/thu-ml/tianshou}}. 
\end{inparaenum}
Note that although RL is often used in a model-free setting, the advances in deep neural networks have made RL an outstanding tool for controller learning.

\textbf{Studied systems.} We study 4 representative system models adopted from classical benchmarks:
\begin{inparaenum}[(1).]
    \item \textbf{PVTOL} models a planar vertical-takeoff-vertical-landing (PVTOL) system for drones and is adopted from~\citep{singh2019robust, singh2019learning}. 
    \item \textbf{Quadrotor} models a physical quadrotor and is adopted from~\citep{singh2019robust}. 
    \item \textbf{Neural lander} models a drone flying close to the ground so that ground effect is prominent and therefore could not be ignored~\citep{liu2019robust}. 
    Note that in this model the ground effect is learned from empirical data of a physical drone using a $4$-layer neural network~\citep{liu2019robust}. Due to the neural network function in the dynamics, the SoS-based method as in~\citep{singh2019robust} cannot handle such a model since neural networks are hard to be approximated by polynomials. %
    \item \textbf{SEGWAY} models a real-world Segway robot and is adopted from \citep{taylor2019learning}. 
    Since this model does not satisfy the sparsity assumption for matrix $B$ in Sec.~\ref{sec:controller_design}, SoS-based method cannot handle it. 
    With this model, we also study the impact of the regularization term $\mathcal{L}_{w2}$ in Equation~\eqref{eq:loss_metric} for learning a tracking controller.
\end{inparaenum}

We report the performance of \toolname on $5$ other examples in Appendix, including a 10D quadrotor model, a cart-pole model, a pendulum model, a two-link planar robot arm system, and the Dubin's vehicle model in Fig.~\ref{fig:arch}. The detailed dynamics of the above 4 representative system models are also reported in Appendix.

\textbf{Implementation details.} For all systems studied here, we model the dual metric as $W(x) = C(x; \theta_w)^\T C(x; \theta_w) + \underline{w} \mathbf{I}$ where $C(x; \theta_w) \in \reals^{n \times n}$ is a 2-layer neural network, of which the hidden layer contains 128 neurons. We use a relative complex structure for the controller. First, two weight matrices of the controller $w_1 = w_1(x, x^*; \theta_{u1})$ and $w_2 = w_2(x, x^*; \theta_{u2})$ are modeled using two 2-layer neural networks with 128 neurons in the hidden layer, where $\theta_{u1}$ and $\theta_{u2}$ are the parameters. Then the controller is given by $u(x,x^*,u^*; \theta_u) = w_2  \cdot \tanh(w_1 \cdot (x-x^*)) + u^*$, where $\theta_u = \{\theta_{u1}, \theta_{u2}\}$ and $\tanh(\cdot)$ is the hyperbolic tangent function. It is easy to verify the assumptions made in Sec.~\ref{sec:controller_design} are satisfied: For the controller, $x = x^* \Rightarrow u(x, x^*, u^*; \theta_u) = u^*$ for all $\theta_u$. For the metric, $W(x;\theta_w)$ is a symmetric matrix satisfying $W(x;\theta_w) \succeq \underline{w}\mathbf{I}$ for all $x$ and all $\theta_w$. A training set with 130K samples is used. We train the NN for $20$ epochs with the Adam~\citep{kingma2014adam} optimizer.

The process of generating random reference trajectories $x^*$ is critical in the evaluation of tracking controllers and also a fundamental part in RL training.
We pre-define a group of sinusoidal signals with some fixed frequencies and randomly sampled a weight for each frequency component, then the reference control inputs $u^*(t)$ are calculated as the linear combination of the sinusoidal signals. The initial states of the reference trajectories $x^*(0)$ are uniformly randomly sampled from a compact set. Using $x^*(0)$ and $u^*(t)$, we can get the reference trajectories $x^{*}(t)$ following the system dynamics~\eqref{eq:sys} with $d(t)=0$. For evaluation, the initial errors $x_e(0)$ are uniformly randomly sampled from a bounded set, and the initial states $x(0)$ are computed as $x(0) = x^*(0) + x_e(0)$. Trajectories of the closed-loop systems are then simulated on a bounded time horizon $[0, T]$.

To make use of the model-free PPO RL library, we formulate some key concepts as follows. The state of the environment at time $t$ is the concatenation of $x(t)$ and $x^*(t)$. At the beginning of each episode, the environment randomly samples two initial points as $x(0)$ and $x^*(0)$ respectively, and a reference control input $u^*(t)$ using the aforementioned sampling method. At each transition, the environment takes the action $u(t)$ from the agent, and returns the next state as $x(t+\Delta t) = x(t) + \Delta t (f(x(t)) + B(x(t))(u(t)+u^*(t)))$ and $x^*(t+\Delta t) = x^*(t) + \Delta t (f(x^*(t)) + B(x^*(t))u^*(t))$, and the reward as $r_t = 1 / (1 + ||w \circ (x(t+\Delta t) - x^*(t+\Delta t))||_2)$, where $w$ are predefined weights for each state. For a fair comparison, RL and our method share the same architecture for the controller.

For the MPC-PyTorch library, we observe that the time horizon, also called the receding time window for computing the control sequence, plays a crucial role. With smaller time window, the resulting controller cannot always produce trajectories that converge to the reference trajectory. The larger the time window is, the smaller the tracking errors are. However, a large time window can also cause the computational time to increase. As a trade-off, we set time window to be $50$ time units in all cases.

\begin{table}[h!]
    \centering
    \caption{\footnotesize Comparison results of \toolname vs. other methods. $n$ and $m$ are dimensions of the state and input space.}
    \resizebox{\columnwidth}{!}{
    \begin{tabular}{c|c|c|c|c|c|c|c|c|c|c|c|c|c|c}
         \toprule
         \multirow{2}{*}{Model} & \multicolumn{2}{c|}{Dim} & \multicolumn{4}{c|}{Execution time per step (ms)} & \multicolumn{4}{c|}{Tracking error (AUC)} &  \multicolumn{4}{c}{Convergence rate ($\lambda/C$)}\\
         \cline{2-15}
         & $n$ & $m$ & \toolname & SoS & MPC & RL & \toolname & SoS & MPC & RL & \toolname & SoS & MPC & RL\\
         \hline
         PVTOL & 6 & 2 & 0.41 & 3.4 & 1968 & 0.41 & 0.659 & 0.975 & 0.892 & 0.735 & 1.153/3.95 & 0.864/4.91 & 0.489/4.95 & 0.799/3.84\\
         \hline
         Quadrotor & 9 & 3 & 0.40 & 12.6 & 3535 & 0.40 & 0.772 & 1.103 & 0.977 & 1.416 & 1.078/3.63 & 0.821/3.30 & 0.401/2.23 & 0.187/1.37 \\
         \hline
         Neural lander & 6 & 3 & 0.36 & - & 3385 & 0.36 & 0.588 & - & 0.713 & 0.793 & 1.724/2.89 & - & 0.822/1.34 & 0.606/1.30 \\
         \hline
         Segway & 4 & 1 & 0.29 & - & - & 0.29 & 0.704 & - & - & 1.408 & 0.446/3.11 & - & - & 0.168/8.12\\
         \bottomrule
    \end{tabular}
    }
    \label{tab:results}
\end{table}

\begin{figure}[t]
\centering
    \includegraphics[width=0.24\textwidth,trim=0 0 0 0,clip]{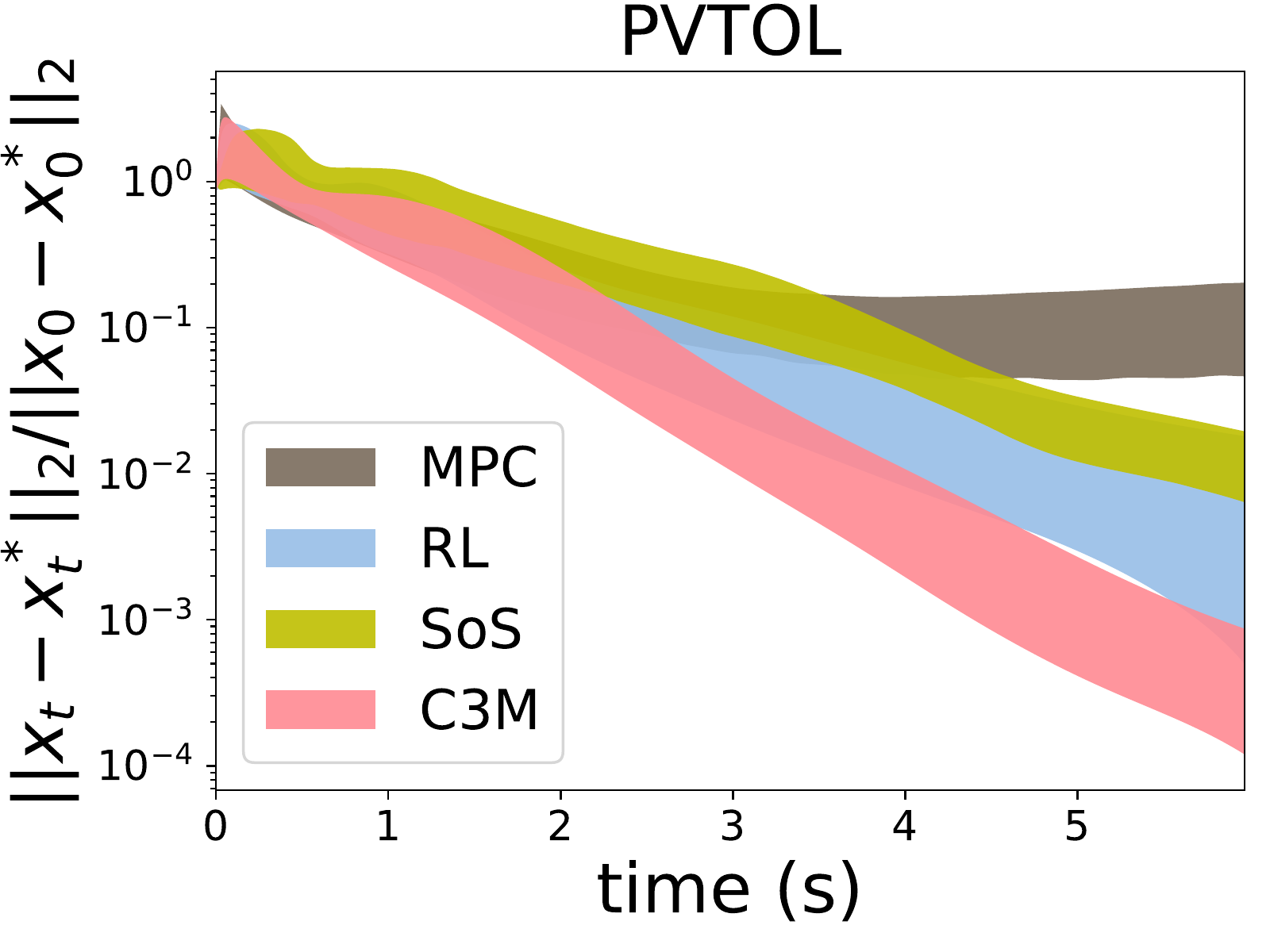}
    \includegraphics[width=0.24\textwidth,trim=0 0 0 0,clip]{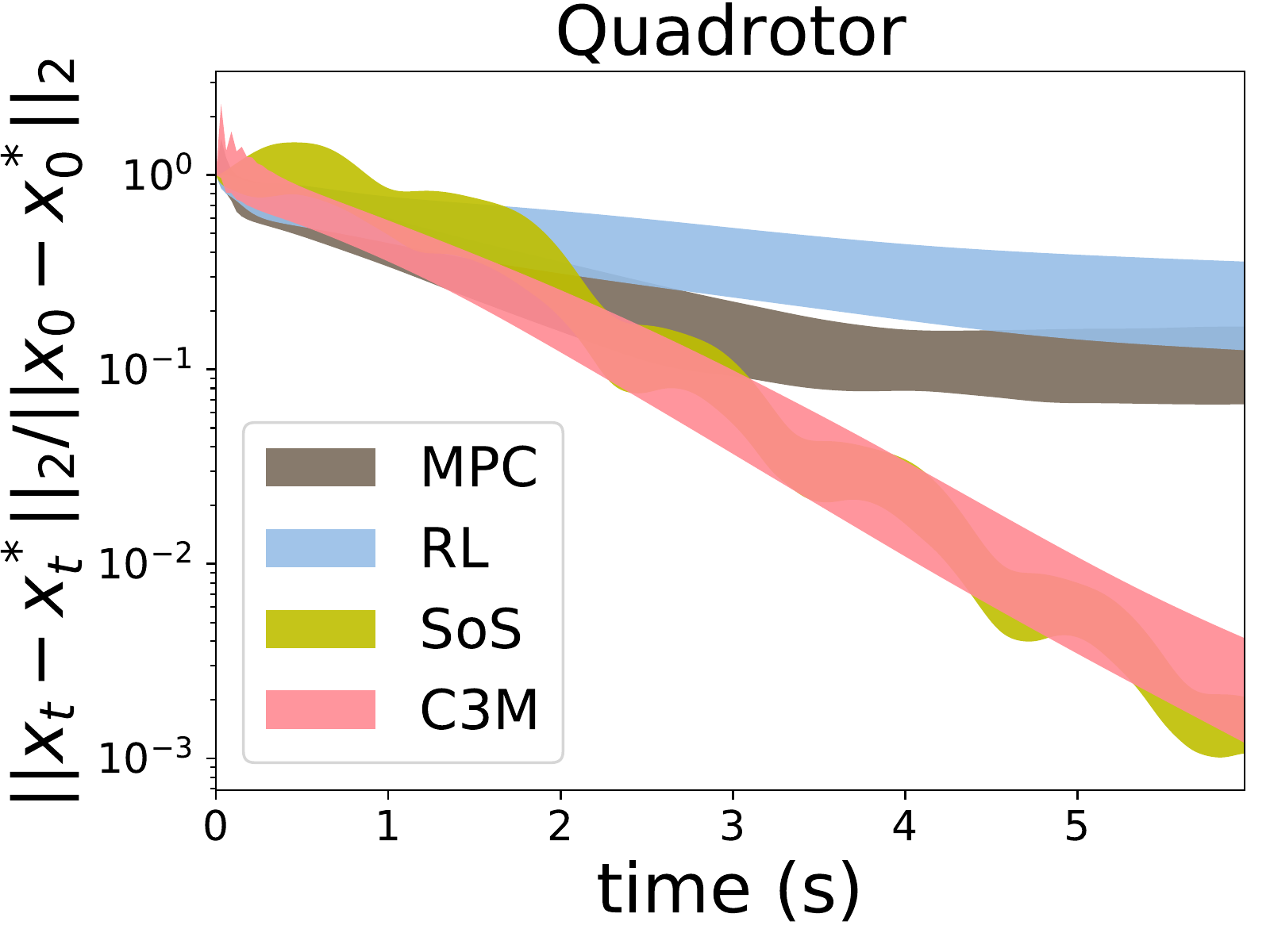}
    \includegraphics[width=0.24\textwidth,trim=0 0 0 0,clip]{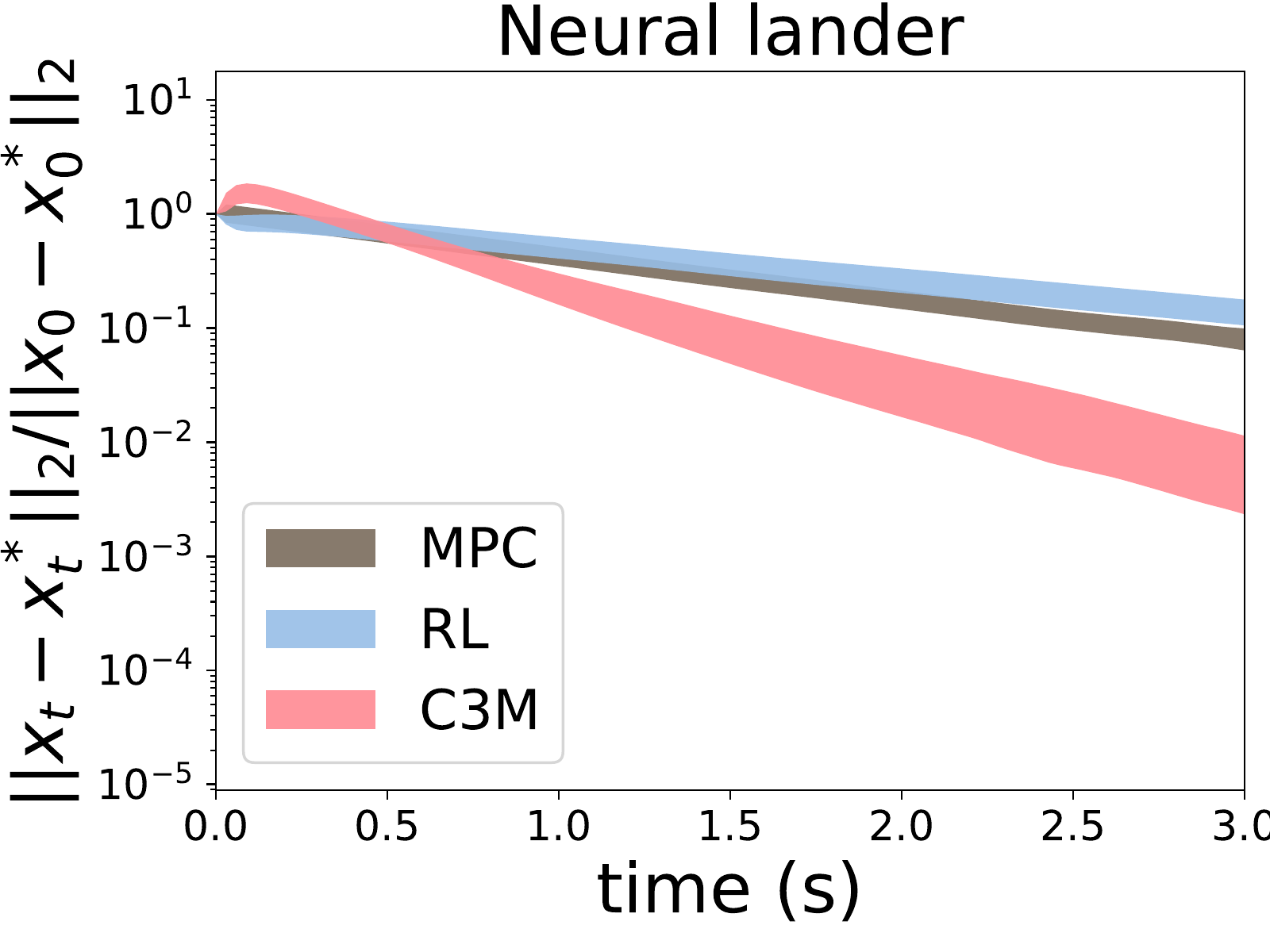}    \includegraphics[width=0.24\textwidth,trim=0 0 0 0,clip]{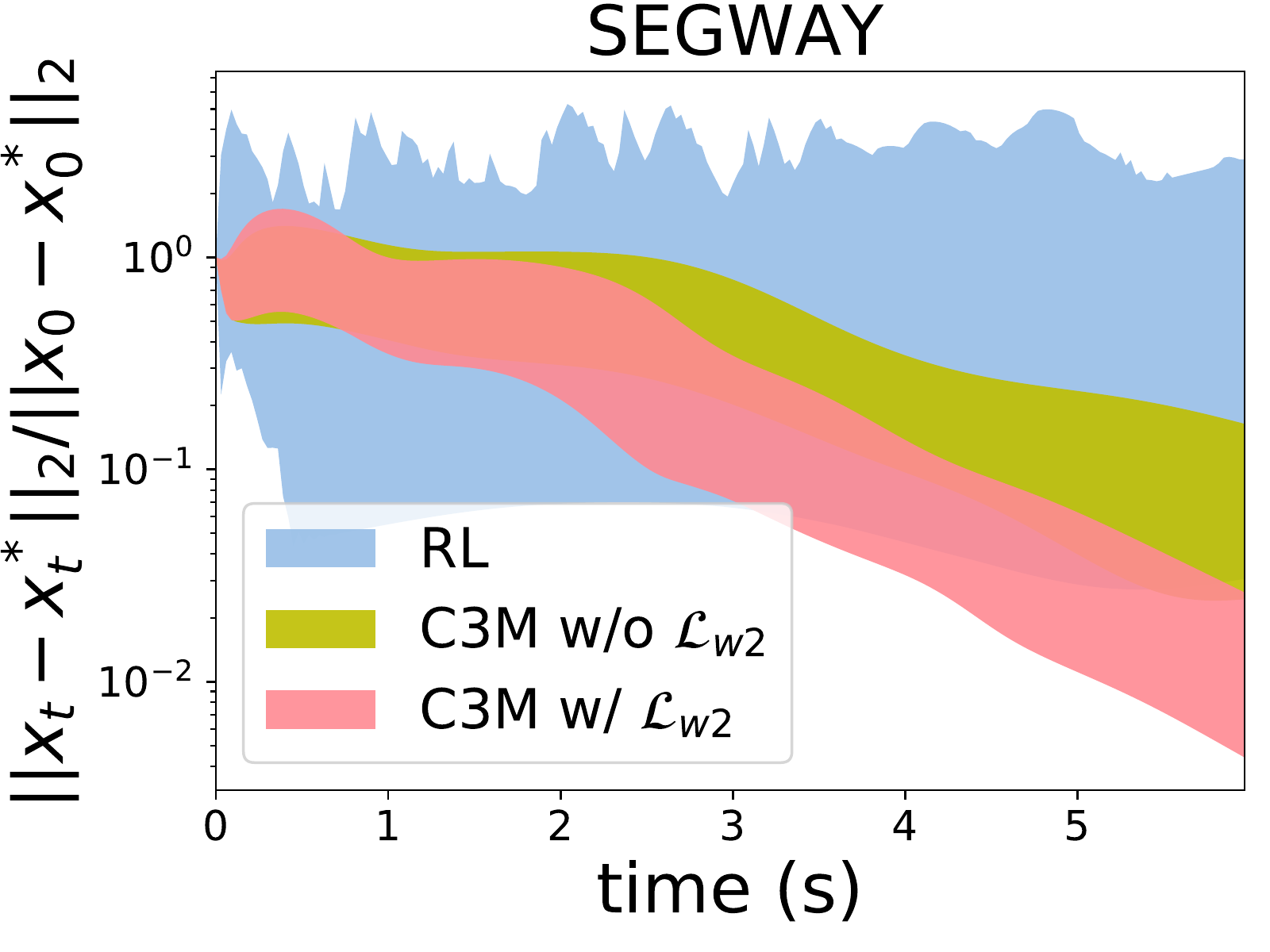}
\caption{\footnotesize Normalized tracking error on the benchmarks using different approaches. The $y$ axes are in log-scale. The tubes are tracking errors between mean plus and minus one standard deviation over 100 trajectories.}
\label{fig:results}
\end{figure}

\textbf{Results and discussion}  Results are reported in Tab.~\ref{tab:results} and Fig.~\ref{fig:results}. For each benchmark, a group of evaluation configurations was randomly sampled using the aforementioned sampling method, and then each method was evaluated with these configurations. 
In Fig.~\ref{fig:results} we show the normalized tracking error $x_e(t) = ||x(t) - x^*(t)||_2 / ||x(0) - x^*(0)||_2$ for each model and each available method. In Tab.~\ref{tab:results}, we evaluate the tracking controllers using two quality metrics as discussed in Sec.~\ref{sec:PG}. (1)~Total tracking error: Given the normalized tracking error curve $x_e(t)$ for $t \in [0, T]$, the total tracking error is just the area under this curve (AUC). (2)~Convergence rate: Given a set of tracking error curves, first we do the following for the error curve $x_e(t)$ that possesses the highest overshoot: We search for the convergence rate $\lambda > 0$ and the overshoot $C \geq 1$ such that $x_e(t) \leq Ce^{-\lambda t}$ for all $t \in [0, T]$ and the AUC of $Ce^{-\lambda t}$ is minimized. Then, $C$ is fixed, and the convergence rate $\lambda$ is calculated for each curve. After obtaining the quality scores for all the error curves, we report the $(1-\alpha)$-th quantile with $\alpha= 0.05$ as in Sec.~\ref{sec:PG}. %

Some observations are in order.
\begin{inparaenum}[(1)]
    \item On PVTOL and the quadrotor models, both SOS and our method successfully found tracking controllers such that the tracking error decreases rapidly.
    However, the SOS-based method entails the calculation of geodesics on the fly for control synthesis, while our method learns a neural controller offline, which makes a difference in running time. Also, due to the aforementioned limitations on control matrix sparsity, the SOS-based method cannot be applied to the neural lander and Segway. 
    \item The performance of the RL method varies with tasks. On PVTOL and the neural lander models, RL achieved comparable results. However, since the state space of the quadrotor model is larger, RL failed to find a contracting tracking controller within a reasonable time. As for the Segway model, although we have tried our best to tune the hyper-parameters, RL failed to find a reasonable controller.
\item Running time of MPC is not practical for online control tasks. Also, the MPC library used in experiments failed to handle the Segway model due to numerical issues.
\item The results of our methods with and without penalty term $\mathcal{L}_{w2}$ on Segway verify the sufficiency of this cost term when the sparsity assumption for $B$ is not satisfied.
\end{inparaenum}

\section{Conclusion}
In this paper, we showed that a certified tracking controller with guaranteed convergence and tracking error bounds can indeed be learned by learning a control contraction metric simultaneously as guidance. Results indicated that such a learned tracking controller can better track the reference or nominal trajectory, and the execution time of such a controller is much shorter than online control methods such as MPC. In future works, we plan to extend our learning framework to systems with unknown dynamics and control non-affine systems.

\acknowledgments{The authors acknowledge support from the 
DARPA Assured Autonomy under 
contract FA8750-19-C-0089,
U.S. Army Research Laboratory Cooperative Research Agreement  W911NF-17-2-0196, U.S. National Science
Foundation(NSF) grants \#1740079 and \#1750009. The views, opinions and/or findings expressed are those of the author(s) and should not be interpreted as representing the official views or policies of the Department of Defense or the U.S. Government.}

\appendix
\bibliography{references}  %
\begin{center}
    \Large{Appendix}
\end{center}

\section{Justification for the CCM regularization terms}
In theory, minimizing $\mathcal{L}_u$ is enough for learning a tracking controller, since inequality~\eqref{eq:c3} implies the contraction of the closed-loop system. However, in the experiments, we found that minimizing $\mathcal{L}_u$ solely is hard. The training loss stayed on a relatively high level, and the accuracy of inequality~\eqref{eq:c3} was extremely low ($\sim 1\%$). Here, $\mathcal{L}_{w1}$ and $\mathcal{L}_{w2}$ are used as auxiliary loss terms to mitigate the difficulty of minimizing $\mathcal{L}_u$. After adding these terms, the accuracy approached $\sim 100\%$.

Furthermore, conditions~(2)~and~(3) are sufficient but not necessary for the existence of contracting controller. In the derivation of conditions~(2)~and~(3), the inequality (inequality (4) in the main paper) which indicates contracting is required to be hold for all unbounded $u$, which is not necessary in the sense that $u$ is usually bounded in a control problem. Thus, even if conditions~(2)~and~(3) do not exactly hold, a contracting tracking controller may still exist.

All the benchmark models used in experiments are under-actuated where the rank of the control matrix $r(B) < n$. In cases where $r(B) = n$ for all $x$ and thus there does not exist a non-zero annihilator matrix $B_\bot$ such that $B_\bot^\T B = 0$, one can easily verify that any uniformly positive definite metric function $M(x)$ can be a valid CCM. The control synthesis is also straightforward in this case. One can directly compute the instantaneous control $u$ given the desired $\dot{x}$ since $B(x)$ is invertible. Moreover, as shown in~\citep{manchester2017control}, existence of a CCM is guaranteed for feedback linearizable systems. In that case, a valid CCM can be constructed analytically from the variable and feedback transformation. We refer the interested readers to~\citep{manchester2017control} for more details.

\section{Proof of Proposition~\ref{thm:lconstant} and CCM validation process example}
\subsection{Proof of Proposition~\ref{thm:lconstant}}
We denote the operator $2$-norm for matrices by $\|\cdot\|_2$. The following lemmas are going to be used for the proof of Proposition~\ref{thm:lconstant}.

\begin{lemma}\label{lemma:eigdiff}
For any two symmetric matrices $A,B \in \reals^{n \times n}$, the difference of their eigenvalues satisfies:
\[
| \lambda_{\max}(A) - \lambda_{\max}(B)| \leq \|A - B \|_2.
\]
\end{lemma}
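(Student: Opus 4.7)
The plan is to prove the bound via the Rayleigh--Ritz variational characterization of the largest eigenvalue. Since $A$ and $B$ are symmetric, we have $\lambda_{\max}(A) = \max_{\|v\|_2 = 1} v^\T A v$ and similarly for $B$. This reduces the comparison of two spectral quantities to a comparison of two supremums of closely related quadratic forms, which is the natural route because the perturbation $A-B$ enters linearly inside the quadratic form.

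Concretely, I would first fix a unit vector $v^*$ attaining the maximum for $A$, so that $\lambda_{\max}(A) = (v^*)^\T A v^*$. Writing $A = B + (A-B)$ gives $\lambda_{\max}(A) = (v^*)^\T B v^* + (v^*)^\T (A-B) v^*$. The first summand is at most $\lambda_{\max}(B)$ by the variational characterization applied to $B$, while the second summand is at most $\|A-B\|_2$ because $A-B$ is symmetric and the operator $2$-norm of a symmetric matrix equals $\max_{\|v\|_2=1} |v^\T (A-B) v|$. Hence $\lambda_{\max}(A) - \lambda_{\max}(B) \le \|A-B\|_2$. Swapping the roles of $A$ and $B$ and repeating the same argument yields $\lambda_{\max}(B) - \lambda_{\max}(A) \le \|A-B\|_2$, and combining the two bounds gives the absolute-value inequality in the statement.

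There is no real obstacle here: the result is essentially a one-step application of Rayleigh--Ritz plus the definition of the spectral norm, and can equivalently be viewed as the simplest case of Weyl's perturbation inequality for Hermitian matrices. The only subtlety worth flagging is that symmetry of $A-B$ (which is automatic from the hypothesis) is what allows $(v^*)^\T (A-B) v^* \le \|A-B\|_2$ without an extra factor; without symmetry one would need to argue via the Hermitian part. If a cleaner write-up is preferred, I would simply cite Weyl's inequality, but the two-line variational argument above is self-contained and keeps the appendix uniform in style with the preceding Lipschitz proof.
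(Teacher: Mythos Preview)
Your argument is correct and is essentially the same approach the paper points to: the paper does not spell out a proof but simply notes that the lemma follows from the Courant--Fischer minimax theorem and cites \cite{fan2015bounded}, and your Rayleigh--Ritz computation is exactly the extremal-eigenvalue case of that minimax characterization. Nothing further is needed.
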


Lemma~\ref{lemma:eigdiff} is a well-known result that uses the Courant-Fischer
minimax theorem. The detailed proof can be found at~\cite{fan2015bounded}.

\begin{lemma}
\label{lemma:Lmult}
For any two functions $A: \reals^{n} \rightarrow \reals^{k \times \ell}$ and $B: \reals^{n} \rightarrow \reals^{\ell \times s}$, if $A$ and $B$ both have Lipschitz constants $L_A$ and $L_B$ with respect to $2$-norm respectively, moreover, if $\forall x, \|A(x)\|_2 \leq S_A$ and $\| B(x) \|_2 \leq S_B$, then for any $x,y \in \reals^{n}$,
\[
\| A(x)B(x) - A(y)B(y) \|_2 \leq \left( S_A L_B + S_B L_A \right) \| x - y \|_2. 
\]
\end{lemma}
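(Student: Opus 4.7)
\textbf{Proof plan for Lemma \ref{lemma:Lmult}.} The plan is to use the standard ``add and subtract'' (telescoping) trick together with submultiplicativity and the triangle inequality for the operator $2$-norm. Specifically, I would write
\[
A(x)B(x) - A(y)B(y) = A(x)\bigl(B(x) - B(y)\bigr) + \bigl(A(x) - A(y)\bigr)B(y),
\]
which is an identity, and then take $\|\cdot\|_2$ of both sides.

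Next, I would apply the triangle inequality to split the right-hand side into two terms and invoke submultiplicativity $\|MN\|_2 \leq \|M\|_2\|N\|_2$ on each product. The first term is bounded by $\|A(x)\|_2 \cdot \|B(x) - B(y)\|_2 \leq S_A \cdot L_B \|x-y\|_2$, using the uniform bound on $A$ and the Lipschitz property of $B$; symmetrically the second is bounded by $\|A(x) - A(y)\|_2 \cdot \|B(y)\|_2 \leq L_A \|x-y\|_2 \cdot S_B$. Summing gives the advertised bound $(S_A L_B + S_B L_A)\|x-y\|_2$.

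There is essentially no obstacle here: the only subtlety is to make sure that the operator $2$-norm is indeed submultiplicative on products of rectangular matrices with compatible dimensions, which follows from its definition as the largest singular value (equivalently, the induced norm from the Euclidean vector norm). Once that is noted, the argument is a direct two-line computation. The lemma will then be used immediately in the main proof of Proposition \ref{thm:lconstant} to bound differences of products such as $M(A+BK)$, combined with Lemma \ref{lemma:eigdiff} to convert operator-norm Lipschitz estimates into Lipschitz estimates on the largest eigenvalue of the symmetrized matrix appearing in condition \eqref{eq:c3}.
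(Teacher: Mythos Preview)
Your proposal is correct and matches the paper's proof essentially line for line: the paper inserts $\pm A(x)B(y)$, applies the triangle inequality and submultiplicativity of $\|\cdot\|_2$, and then invokes the Lipschitz and boundedness assumptions to arrive at $(S_A L_B + S_B L_A)\|x-y\|_2$. Your remark on the downstream use of the lemma in the proof of Proposition~\ref{thm:lconstant} is also accurate.
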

\begin{proof}
\begin{equation}
    \begin{array}{rcl}
        \| A(x)B(x) - A(y)B(y) \|_2 & = & \| A(x)B(x) - A(x)B(y) + A(x) B(y) - A(y)B(y) \|_2 \\
                                    & \leq & L_B \|A(x)\|_2\|x-y\|_2 + L_A \|B(y)\|_2 \|x-y\|_2 \\
                                    & \leq & \left( S_A L_B + S_B L_A \right) \| x - y \|_2.
    \end{array}
\end{equation}
\end{proof}

The proof of Proposition~\ref{thm:lconstant} follows from the above lemmas.
\begin{proof}
For all $X$ and $Y \in \X\times\X\times\U$, we have
\begin{align*}
    \big|\lambda_{\max}\left( \dot{M}(X) + (A(X)+B(X)K(X))^\T M(X) + M(X) (A(X)+B(X)K(X)) + 2 \lambda M(X) \right)   \\
    - \lambda_{\max}\left( \dot{M}(Y) + (A(Y)+B(Y)K(Y))^\T M(Y) + M(Y) (A(Y)+B(Y)K(Y)) + 2 \lambda M(Y) \right)\big| \\
    \leq \|\dot{M}(X) - \dot{M}(Y) \|_2 +  2\|M(X)A(X)-M(Y)A(Y)\|_2 \\
     + 2\|M(X)B(X)K(X) - M(Y)B(Y)K(Y)\|_2 + 2\lambda\|M(X) - M(Y)\|_2  \\
    \leq \left(L_{\dot{M}} + 2 \left(S_ML_A + S_A L_M + S_MS_BL_K +  S_BS_KL_M + S_M S_KL_B + \lambda L_M \right)\right) \|X-Y\|_2.
\end{align*}
\end{proof}

\subsection{CCM validation on Dubin's vehicle}
First, the Lipschitz constants and bounds on $2$-norms are derived as follows.
\paragraph{$L_A$ and $S_A$:} By definition, $A = \frac{\partial f}{\partial x} + \sum\frac{\partial b_i}{\partial x} u^i = \begin{bmatrix} 0 & 0 & -v\cos(\theta) & \cos(\theta) \\ 0 & 0 & v\sin(\theta) & \sin(\theta) \\ 0 & 0 & 0 & 0 \\ 0 & 0 & 0 & 0 \end{bmatrix}$. Considering a vector-valued function $\tilde{A} = [-v\cos(\theta), v\sin(\theta), \cos(\theta), \sin(\theta)]$ and its Lipschitz constant $L_{\tilde{A}}$ w.r.t. the Frobenius norm, since the Jacobian of $\tilde{A}$ is given by $\frac{\partial \tilde{A}}{\partial x} = \begin{bmatrix} -\cos(\theta) & v\sin(\theta) \\ \sin(\theta) & v\cos(\theta) \\ 0 & -\sin(\theta) \\ 0 & \cos(\theta) \end{bmatrix}$, we have $L_{\tilde{A}} = \sup ||\frac{\partial \tilde{A}}{\partial x}||_F = \sqrt{2+v_h^2}$, where $v_h$ is the upper bound of the velocity. Obviously, under the Frobenius norm, the Lipschitz constants of $A$ and $\tilde{A}$ coincide. Moreover, for any matrix $A$, we have $\|A\|_2 \leq \|A\|_F$, and thus $L_A \leq L_{\tilde{A}} = \sqrt{2+v_h^2}$. As for the norm bound, $S_A = \sup \|A\|_2 \leq \sup \|A\|_F = \sqrt{1+v_h^2}$.

\paragraph{$L_B$ and $S_B$:} By definition, $B = \begin{bmatrix} 0 & 0 \\ 0 & 0 \\ 1 & 0 \\ 0 & 1 \end{bmatrix}$, which is a constant. Thus, $L_B = 0$, and $S_B = \sigma_{max}(B) = 1$, where $\sigma_{max}$ is the largest singular value.

\paragraph{$L_M$ and $S_M$:} By definition, we have $W(x) = C(x)^\T C(x) + \underline{w}\mathbf{I}$ and $M(x) = W(x)^{-1}$, and thus $S_M = \sup \|M(x)\|_2 \leq \frac{1}{\underline{w}}$. Moreover, $\|M(x) - M(y)\|_2 = \|M(x) (W(x) - W(y)) M(y)\|_2 \leq \frac{1}{\underline{w}^2} \|W(x) - W(y)\|_2$. Thus, we have $L_M \leq \frac{1}{\underline{w}^2} L_W$. As for $L_W$, by Lemma~\ref{lemma:Lmult}, $L_W \leq 2S_C L_C$, where $L_C$ can be evaluated using tools for estimating the Lipschitz constants of NNs. In our experiments, we use the official implementation\footnote{\url{https://github.com/arobey1/LipSDP}} of~\citep{fazlyab2019efficient}. After obtaining $L_C$, $S_C$ can be upper bounded using samples and the Lipschitz constant $L_C$.

\paragraph{$L_{\dot{M}}$:} By definition, $\dot{M} = \sum_{i=1}^{n} \frac{\partial M}{\partial x_i} \dot{x_i}$. We randomly sampled 10K points and evaluate the gradient of $\dot{M}$, then the maximum of the norm of the gradient is returned as a Lipschitz constant.

\paragraph{$L_K$ and $S_K$:} For simplicity of the analysis, we use a simpler neural controller $u = k(x,x^*) x_e + u^*$, where $k$ is a two-layer neural network and $x_e := (x-x^*)$. Then, $K = \frac{\partial u}{\partial x} = k(x,x^*) + \sum_{i=1}^{n} x_e^i \frac{\partial k_i}{\partial x}$, where $k_i$ is the $i$-th column of $k$ and $x_e^i$ is the $i$-th element of $x_e$. Thus, $S_K \leq S_k + \sum_{i=1}^{n} \sup |x_e^i| L_{k_i|x}$, where $L_{k_i|x}$ is the Lipschitz constant of $k_i$ w.r.t. $x$. Also, $L_K \leq L_k + \sum_{i=1}^{n} \sup |x_e^i| L_{\frac{\partial ki}{\partial x}} + \sqrt{2} \sum_{i=1}^n L_{k_i|x}$, where we use the facts that $x_{e}^i = x_i - x^*_i \implies L_{x_e^i} = \sqrt{2}$, and $S_{\frac{\partial k_i}{\partial x}} \leq L_{k_i|x}$. As for $L_{\frac{\partial k_i}{\partial x}}$, we have $\frac{\partial k_i}{\partial x} = A_2^i \cdot diag(dtanh(A_1 [x, x^*])) \cdot A_{11}$, where $A_1$ is the weights in the first layer, $A_{11}$ is the weights associated to $x$ in the first layer, $A_2^i$ is the weights associated to the $i$-th column of $k$ in the second layer, and $dtanh$ is the derivative of the $tanh$ function. Thus, $L_{\frac{\partial k_i}{\partial x}} \leq ||A_2^i|| \cdot ||A_{11}|| \cdot ||A_1|| \cdot L_{dthan}$.

\paragraph{Validation process:}
We successfully verified the trained model for a state space where $[-5, -5, -\pi, 1] \leq x,x^* \leq [5, 5, \pi, 2]$, $[-0.1, -0.1, -0.1, -0.1] \leq x-x^* \leq [0.1,0.1,0.1,0.1]$, and $[-1,0] \leq u^* \leq [1,0]$ with $\lambda = 0.1$. The derived Lipschitz constant using Proposition~\ref{thm:lconstant} for the trained model was $616.8$. After that, we discretized the state space and evaluated the maximum eigenvalues of the LHS of inequality~\eqref{eq:c3} at the grid points, and the results gave a theoretical guarantee on the satisfaction of inequality~\eqref{eq:c3} for all $x$, $x^*$, and $u^*$.

\section{Proof of Proposition~\ref{prop:quantile}}
The following lemma from~\cite{lei2018distribution} is going to be used.
\begin{lemma}
Consider a set of i.i.d. random variables $\{U_1, \cdots, U_n\}$, then the following inequality holds for a new i.i.d. sample $U_{n+1}$,
\[
\Pr(U_{n+1} \geq q_{1-\alpha}) \leq 1 - \alpha,
\]
where $q_{1-\alpha}$ is $(1-\alpha)$-th quantile of $\{U_i\}_{i=1}^{n}$, i.e.
\[
q_{1-\alpha} = \begin{cases} U_{(\lceil (1-\alpha)(n+1) \rceil)} &\mbox{if } \lceil (1-\alpha)(n+1) \rceil \leq n, \\
\infty &\mbox{otherwise} \end{cases},
\]
where $U_{(1)} \leq \cdots \leq U_{(n)}$ is the order statistics of $\{U_{i}\}_{i=1}^{n}$.
\end{lemma}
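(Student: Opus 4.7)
The plan is to reduce the claim to the exchangeability-based order-statistic inequality of Lei et al.~\cite{lei2018distribution} that is already stated as the auxiliary lemma above. First I would verify that the scores $m_1,\dots,m_{n+1}$ themselves form an i.i.d.\ sequence. By hypothesis the $n{+}1$ evaluation configurations (initial state $x(0)$, initial reference state $x^*(0)$, and reference control signal $u^*(\cdot)$) are i.i.d., and for a fixed learned tracking controller each $m_i$ is obtained by applying one and the same deterministic quality-metric functional to the $i$-th configuration (simulate the closed loop, form the tracking-error curve, summarize to a scalar). Composing a deterministic measurable map with an i.i.d.\ sequence preserves the i.i.d.\ property, so $\{m_i\}_{i=1}^{n+1}$ satisfies the hypothesis of the lemma.

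Second, I would apply the lemma with $U_i := m_i$. The lemma's definition of $q_{1-\alpha}$ coincides exactly with the one in the proposition (the order statistic $m_{(\lceil (1-\alpha)(n+1)\rceil)}$ when that index is at most $n$, and $+\infty$ otherwise), so the bound $\Pr(m_{n+1} \geq q_{1-\alpha}) \leq 1-\alpha$ transfers verbatim. The boundary case $\lceil(1-\alpha)(n+1)\rceil > n$ is vacuous because $q_{1-\alpha}=+\infty$ and the event $\{m_{n+1}\geq +\infty\}$ has probability zero.

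For completeness I would include a short sketch of why the cited lemma itself holds, since the whole argument rests on it. Under the i.i.d.\ (hence exchangeable) assumption, after breaking ties uniformly at random if needed, the rank of $m_{n+1}$ in the pooled collection $\{m_1,\dots,m_{n+1}\}$ is uniformly distributed on $\{1,\dots,n+1\}$. The event $\{m_{n+1} \geq q_{1-\alpha}\}$ is contained in the event that this rank is at least $\lceil(1-\alpha)(n+1)\rceil$, which has probability at most $(n+1-\lceil(1-\alpha)(n+1)\rceil+1)/(n+1) \leq \alpha$; since $\alpha \leq 1-\alpha$ in the regime $\alpha \leq 1/2$ used in the experiments (e.g.\ $\alpha=0.05$), the bound $\leq 1-\alpha$ stated in the proposition follows.

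The main obstacle is essentially bookkeeping rather than mathematical depth: the proposition is a direct corollary of a standard conformal-prediction quantile guarantee, so the work is confined to (i) verifying the i.i.d.-to-i.i.d.\ transfer for the quality scores under a fixed controller and (ii) handling the two corner cases, namely indices exceeding $n$ in the definition of $q_{1-\alpha}$ and ties in the order statistics, both of which are routine.
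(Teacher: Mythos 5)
Your overall route is the same as the paper's: the paper disposes of this lemma in a single sentence, citing \cite{lei2018distribution} and invoking the fact that, by exchangeability, the rank of $U_{n+1}$ in the pooled sample $\{U_1,\dots,U_{n+1}\}$ is uniform on $\{1,\dots,n+1\}$. Note, however, that most of your write-up (the i.i.d.\ transfer of the quality scores and the application of the lemma to them) is really a proof of Proposition~\ref{prop:quantile}, not of this lemma; only your third paragraph addresses the statement actually at issue, and that is the part I assess below.

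There is an off-by-one slip there that breaks your stated chain of inequalities. Writing $k=\lceil(1-\alpha)(n+1)\rceil$, the event $\{U_{n+1}\geq U_{(k)}\}$ (order statistics of the first $n$ points only) forces at least $k$ of $U_1,\dots,U_n$ to lie at or below $U_{n+1}$, so the pooled rank of $U_{n+1}$ is at least $k+1$, not $k$. With your containment the probability bound is $(n+2-k)/(n+1)$, and since $k<(1-\alpha)(n+1)+1$ this quantity is \emph{strictly greater} than $\alpha$ --- e.g.\ $n=19$, $\alpha=0.05$ gives $k=19$ and a bound of $0.1$, not $0.05$ --- so the intermediate claim ``$\leq\alpha$'' is false as written; using rank $\geq k+1$ gives $(n+1-k)/(n+1)\leq\alpha$ and the argument goes through. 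A second, subtler point: breaking ties uniformly at random does not rescue the containment in the direction you need, because when $U_{n+1}$ ties with $U_{(k)}$ its randomized rank may fall below $k+1$ while the event $\{U_{n+1}\geq U_{(k)}\}$ still occurs (for a point mass the left-hand side equals $1$), so the ``$\geq$'' form of the bound genuinely requires either a strict inequality, as in \cite{lei2018distribution}, or a continuity assumption --- a defect of the lemma's statement rather than of your proof. Finally, your observation that the advertised bound $1-\alpha$ only follows from the sharp bound $\alpha$ when $\alpha\leq 1/2$ is correct and is a restriction the paper silently omits; as stated the lemma fails for $\alpha$ close to $1$.
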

The above lemma follows from an obvious fact that the rank of $U_{n+1}$ in $\{U_i\}_{i=1}^{n+1}$ is uniformly distributed in $\{1,\cdots,n+1\}$.

Since the evaluation configurations are all i.i.d., the corresponding quality scores $\{m_i\}_{i=1}^{n+1}$ are also i.i.d. The proposition immediately follows from the above lemma.

\section{Proof of Theorem~\ref{thm:robustness}}
\begin{proof}
Since $\forall x$, $M(x) \in S_n^{>0}$, there exists $\Theta(x)$ such that $M(x) = \Theta(x)^\T \Theta(x)$ for all $x$. Now, let us consider the smallest path integral between $x_1(t)$ and $x_2(t)$ w.r.t. metric $M(x)$ and denote it by $R(t) = \int_{x_1(t)}^{x_2(t)} \sqrt{\dx^\T M(x) \dx} = \int_{x_1(t)}^{x_2(t)} \|\Theta(x) \dx\|_2$. Then differentiating $R(t)$ w.r.t. $t$ yields $\dot{R} \leq -\lambda R + \|\Theta d\|_2 \leq -\lambda R + \epsilon \sqrt{\overline{m}}$. (see~\citep{lohmiller1998contraction}, pp. 11 (vii)) By comparison lemma~(see \citep{khalil2002nonlinear}, pp. 102), $R(t)$ is bounded as $R(t) \leq R(0) e^{- \lambda t} + \frac{\epsilon}{\lambda} \sqrt{\overline{m}} (1 - e^{-\lambda t})$. Since $\Theta$ is uniformly bounded as $\Theta(x) \succeq \sqrt{\underline{m}} \mathbf{I}$, we have $R(t) \geq \sqrt{\underline{m}} \|x_1(t) - x_2(t)\|_2$, which follows from the fact that the smallest path integral w.r.t. a constant metric is the path integral along the straight line. Combining these two inequalities yields the theorem.
\end{proof}

\section{More details on the experimental results}
\subsection{Hyper-parameters}
For all the models, we set $\underline{w} = 0.1$ and $\overline{w} = 10$. For PVTOL, Quadrotor, Neural lander, Segway, Quadrotor2, and Cart-pole, we set $\lambda = 0.5$. For Dubin's car, we set $\lambda = 1$. For TLPRA, we set $\lambda = 2$. For Pendulum, we set $\lambda = 3$.

For the evaluation of RL, we trained the model for 1K epochs (1K steps per epoch, and thus 1M steps in total). The best model was used for evaluation.

\subsection{Dynamics of the $4$ benchmarks discussed in Section~\ref{sec:exps}}
As mentioned in the main paper, for each benchmark model, we have to define the state space $\X$ and the reference control space $\U$. Also, for evaluation purpose, the initial state of the reference trajectory is sampled from a set $\X_0$, and the initial error is sampled from a set $\X_{e0}$. All these sets are defined as hyper-rectangles. For two vectors $x,y \in \reals^n$, $x \leq y$ or $x \geq y$ means element-wise inequality.
\paragraph{PVTOL} models a planar vertical-takeoff-vertical-landing (PVTOL) system for drones and is adopted from~\citep{singh2019robust, singh2019learning}. The system includes $6$ variables $x := [p_x,p_z,\phi,v_x,v_z,\dot{\phi}]$, where $p_x$ and $p_z$ are the positions, $v_x$ and $v_z$ are the corresponding velocities, $\phi$ is the roll, and $\dot{\phi}$ is the angular rate. The $2$-dimensional control input $u$ corresponds to the thrusts of the two rotors. The dynamics of the system are as follows:
\[\dot{x} = \begin{bmatrix}v_x \cos(\phi) - v_z \sin(\phi)\\ v_x \sin(\phi) + v_z \cos(\phi)\\ \dot{\phi}\\ v_z \dot{\phi} - g\sin(\phi)\\ -v_x \dot{\phi} - g\cos(\phi)\\ 0 \end{bmatrix} + \begin{bmatrix}0 & 0\\ 0 & 0\\ 0 & 0\\ 0 & 0\\ 1/m & 1/m\\ l/J & -l/J \end{bmatrix} u,\] where $m=0.486$, $J = 0.00383$, $g = 9.81$, and $l = 0.25$. For the sets, we set $\X = \{[-35, -2, -\frac{\pi}{3}, -2, -1, -\frac{\pi}{3}] \leq x \leq [0, 2, \frac{\pi}{3}, 2, 1, \frac{\pi}{3}]\}$, $\U = \{[mg/2 - 1, mg/2 - 1] \leq u \leq [mg/2 + 1, mg/2 + 1]\}$, $\X_0 = \{[0, 0, -0.1, 0.5, 0, 0] \leq x_0 \leq [0, 0,  0.1,  1, 0, 0]\}$, and $\X_{e0} = \{[-0.5, -0.5, -0.5, -0.5, -0.5, -0.5] \leq x_{e0} \leq [0.5, 0.5, 0.5, 0.5, 0.5, 0.5]\}$.
\paragraph{Quadrotor} is adopted from~\citep{singh2019robust}. The state variables are given by $x := [p_x, p_y, p_z, v_x, v_y, v_z, f, \phi, \theta, \psi]$, where $p_k$ and $v_k$ for $k\in\{x,y,z\}$ are positions and velocities, and $\phi$, $\theta$ and $\psi$ are roll, pitch, and yaw respectively, and $f > 0$ is the net (normalized by mass) thrust generated by the four rotors. The control input is considered as $u := [\dot{f}, \dot{\phi}, \dot{\theta}, \dot{\psi}]$. The dynamics of the system are as follows:
\[\dot{x} = \begin{bmatrix} v_x\\v_y\\v_z\\-f\sin(\theta)\\f\cos(\theta)\sin(\phi)\\g-f\cos(\theta)\cos(\phi)\\0\\0\\0\\0\end{bmatrix} + \begin{bmatrix} 0 & 0 & 0 & 0\\0 & 0 & 0 & 0\\0 & 0 & 0 & 0\\0 & 0 & 0 & 0\\0 & 0 & 0 & 0\\0 & 0 & 0 & 0\\1 & 0 & 0 & 0\\0 & 1 & 0 & 0\\0 & 0 & 1 & 0\\0 & 0 & 0 & 1\\ \end{bmatrix} u,\] where $g = 9.81$. For the sets, we set $\X = \{[-30, -30, -30, -1.5, -1.5, -1.5, 0.5g, -\frac{\pi}{3}, -\frac{\pi}{3}, -\frac{\pi}{3}] \leq x \leq [30, 30, 30, 1.5, 1.5, 1.5, 2g, \frac{\pi}{3}, \frac{\pi}{3}, \frac{\pi}{3}]$, $\U = \{[-1, -1, -1, -1] \leq u \leq [1, 1, 1, 1]\}$, $\X_0 = \{[-5, -5, -5, -1, -1, -1, g, 0, 0, 0] \leq x_0 \leq [5,  5,  5,  1,  1,  1, g, 0, 0, 0]\}$, and $\X_{e0} = \{[-0.5, -0.5, -0.5, -0.5, -0.5, -0.5, -0.5, -0.5, -0.5, -0.5] \leq x_{e0} \leq [0.5, 0.5, 0.5, 0.5, 0.5, 0.5, 0.5, 0.5, 0.5, 0.5]\}$.
\paragraph{Neural lander} models a drone flying close to the ground so that ground effect is prominent and therefore could not be ignored~\citep{liu2019robust}. The state is give by $x := [p_x,p_y,p_z, v_x, v_y, v_z]$, where $p_k$ and $v_k$ for $k \in \{x, y, z\}$ are the positions and velocities. The $3$-dimensional control input is given by $u := [a_x, a_y, a_z]$ which are the acceleration in three directions. Note that in this model the ground effect is learned from empirical data of a physical drone using a $4$-layer neural network~\citep{liu2019robust}. Due to the neural network function in the dynamics, the SoS-based method as in~\citep{singh2019robust} cannot handle such a model since neural networks are hard to be approximated by polynomials. However, C3M can handle this model since it doesn't impose hard constraints and only requires the ability to evaluate the dynamics and its gradient. The dynamics of the system are as follows:
\[\dot{x} = \begin{bmatrix}v_x\\v_y\\v_z\\Fa_1/m\\Fa_2/m\\Fa_3/m - g\end{bmatrix} + \begin{bmatrix}0&0&0\\0&0&0\\0&0&0\\1&0&0\\0&1&0\\0&0&1\\\end{bmatrix} u,\] where $m = 1.47$, $g = 9.81$, and $Fa_i = Fa_i(z,v_x,v_y.v_z)$ for $i=1,2,3$, are neural networks. For the sets, we set $\X = \{[-5, -5, 0, -1, -1, -1] \leq x \leq [5, 5, 2, 1, 1, 1]$, $\U = \{[-1,-1,-3] \leq u \leq [1,1,9]\}$, $\X_0 = \{[-3, -3, 0.5, 1, 0, 0] \leq x_0 \leq [3,  3, 1, 1, 0, 0]\}$, and $\X_{e0} = \{[-1, -1, -0.4, -1, -1, 0] \leq x_{e0} \leq [1, 1, 1, 1, 1, 0]\}$.
\paragraph{SEGWAY} models a real-world Segway robot and is adopted from \citep{taylor2019learning}. The state variables are given by $x := [p_x, \theta, v_x, \omega]$ where $p_x$ and $v_x$ are the position and velocity, and $\theta$ and $\omega$ are the angle and angular velocity. The control input is the torque of the motor. The dynamics of the system are as follows:
\[\dot{x} = \begin{bmatrix}v_x\\\omega\\\frac{\cos(\theta)(9.8\sin(\theta) - 1.8u + 11.5v) - 10.9u + 68.4v - 1.2 \omega^2 \sin(\theta)}{\cos(\theta) - 24.7}\\\frac{(9.3u-58.8v)\cos(\theta) + 38.6u - 243.5v - \sin(\theta)(208.3+\omega^2\cos(\theta))}{\cos^2(\theta) - 24.7}\end{bmatrix}.\] For the sets, we set $\X = \{[-5, -\frac{\pi}{3}, -1, -\pi] \leq x \leq [5, \frac{\pi}{3}, 1, \pi]$, $\U = \{0 \leq u \leq 0\}$, $\X_0 = \{[0,0,0,0] \leq x_0 \leq [0,0,0,0]\}$, and $\X_{e0} = \{[-1, -\frac{\pi}{3}, -0.5, -\pi] \leq x_{e0} \leq [1, \frac{\pi}{3}, 0.5, \pi]\}$.

\subsection{Additional benchmarks}
\paragraph{Cart-pole} is a well-known model for evaluating control algorithms. The state variables are given by $x := [p_x, \theta, v_x, \omega]$. The dynamics are given by
\[
\dot{x} = \begin{bmatrix}
v_x\\
\omega\\
\frac{u+m_p\sin(\theta)(l\omega^2-g\cos(\theta))}{m_c + m_p\sin^2(\theta)}\\
\frac{u\cos(\theta) + m_pl\omega^2\cos(\theta)\sin(\theta) - (m_c + m_p)g\sin(\theta)}{l(m_c + m_p\sin^2(\theta))}
\end{bmatrix},
\]
where $m_c = 1$, $m_p = 1$, $g = 9.8$, and $l = 1$. For the sets, we set $\X = \{[-5, -\frac{\pi}{3}, -1, -1] \leq x \leq [5, \frac{\pi}{3}, 1, 1]$, $\U = \{0 \leq u \leq 0\}$, $\X_0 = \{[0,0,0,0] \leq x_0 \leq [0,0,0,0]\}$, and $\X_{e0} = \{[-0.3, -0.3, -0.3, -0.3] \leq x_{e0} \leq [0.3, 0.3, 0.3, 0.3]\}$.

\paragraph{Quadrotor2} is adopted from~\citep{herbert2017fastrack}. The state variables are given by $x := [p_x, p_y, p_z, v_x, v_y, \theta_x, \theta_y, \omega_x, \omega_y, v_z]$. The control input is given by $u:=[a_x,a_y,a_z]$. The dynamics are given by
\[
\dot{x} = \begin{bmatrix}v_x\\v_y\\v_z\\g\tan(\theta_x)\\g\tan(\theta_y)\\-d_1\theta_x+\omega_x\\-d_1\theta_y+\omega_y\\-d_0\theta_x+n_0a_x\\-d_0\theta_y+n_0a_y\\k_Ta_z - g\end{bmatrix},
\] where $d_0 = 10$, $d_1 = 8$, $n_0 = 10$, $k_T = 0.91$, and $g = 9.81$. For the sets, we set $\X = \{[-15, -15, -15, -2, -2, -\frac{\pi}{3}, -\frac{\pi}{3}, -\frac{\pi}{3}, -\frac{\pi}{3}, -2] \leq x \leq [15, 15, 15, 2, 2, \frac{\pi}{3}, \frac{\pi}{3}, \frac{\pi}{3}, \frac{\pi}{3}, 2]$, $\U = \{[-10,-10,0] \leq u \leq [10,10,1.5g]\}$, $\X_0 = \{[-2,-2,-2,-1,-1,-0.5,-0.5,-0.5,-0.5,-1] \leq x_0 \leq [2,2,2,1,1,0.5,0.5,0.5,0.5,1]\}$, and $\X_{e0} = \{[-0.5, -0.5, -0.5, -0.5, -0.5, -0.5, -0.5, -0.5, -0.5, -0.5] \leq x_{e0} \leq [0.5, 0.5, 0.5, 0.5, 0.5, 0.5, 0.5, 0.5, 0.5, 0.5]\}$.
\paragraph{Pendulum} is a simple model for evaluating control methods. The state variables are given by $x := [\theta, \dot{\theta}]$, where $\theta$ is the angle, and $\dot{\theta}$ is the angular velocity. The one-dimensional control input corresponds to the torque of the motor. The dynamics are as follows:
\[
\dot{x} = \begin{bmatrix}\dot{\theta}\\\frac{mgl\sin(\theta) - 0.1\dot{\theta} + u}{ml^2}\end{bmatrix},
\] where $g = 9.81$, $m = 0.15$, and $l = 0.5$. As for the sets, we set $\X = \{[-\frac{\pi}{3}, -\frac{\pi}{3}] \leq x \leq [\frac{\pi}{3}, \frac{\pi}{3}]$, $\U = \{-1 \leq u \leq 1\}$, $\X_0 = \{[0,0] \leq x_0 \leq [0,0]\}$, and $\X_{e0} = \{[-\frac{\pi}{4}, -\frac{\pi}{4}] \leq x_{e0} \leq [\frac{\pi}{4}, \frac{\pi}{4}]\}$.
\paragraph{Dubin's car.} The state of the car is given by $x := [p_x,p_y,\theta,v]$, where $p_x, p_y$ are the positions, $\theta$ is the heading direction, and $v$ is the velocity. Its motion is controlled by two inputs: angular velocity $\omega$ and linear acceleration $a$. The car's dynamics are given by:
\[
\dot{x} = \begin{bmatrix}v\cos(\theta)\\v\sin(\theta)\\\omega\\a\end{bmatrix}.\] For the sets, we set $\X = \{[-5, -5, -\pi, 1] \leq x \leq [5, 5, \pi, 2]$, $\U = \{[-1,0] \leq u \leq [1,0]\}$, $\X_0 = \{[-2, -2, -1, 1.5] \leq x_0 \leq [2, 2, 1, 1.5]\}$, and $\X_{e0} = \{[-1, -1, -1, -1] \leq x_{e0} \leq [1,1,1,1]\}$. 
There is a simpler Dubin's car model with three state variables $x := [p_x,p_y,\theta]$ and a two-dimensional control input $u := [v, \omega]$. The dynamics are similar to the above ODEs without $\dot{v} = a$. However, we notice that in the simpler model $f(x) \equiv 0$ and $B(x)$ is not full-rank. Therefore, Eq.~\eqref{eq:c1} in the paper is not feasible and the loss term $\mathcal{L}_{w1}$ does not make sense. If we train the model only with the loss $\mathcal{L}_{u}$ as in Eq.~\eqref{eq:loss_controller}, the resulting controller is not performing well.

\paragraph{TLPRA} models a Two-Link Planar Robot Arm system and is adopted from~\citep{yang2011reinforcement}. The state variables are given by $x := [q_1, q_2, \dot{q}_1, \dot{q}_2]$, where $q$ and $\dot{q}$ are the rotational angle and angular velocity of the joints respectively. The control input is given by $u := [\tau_1, \tau_2]$ which are the torques applied on the joints. The dynamics are given by
\begin{multline*}
\begin{bmatrix}\ddot{q}_1\\\ddot{q}_2\end{bmatrix} = \begin{bmatrix}\alpha+\beta+2\eta \cos(q_2) & \beta + \eta \cos(q_2)\\\beta + \eta \cos(q_2) & \beta \end{bmatrix}^{-1} 
\\
\begin{bmatrix}
\tau_1 + \eta(2\dot{q}_1\dot{q}_2 + \dot{q}_2^2)\sin(q_2) - \alpha e_1 \cos(q_1) - \eta e_1 \cos(q_1 + q_2)\\\tau_2 - \eta \dot{q}_1^2 \sin(q_2) - \eta e_1 \cos(q_1+q_2)\end{bmatrix},
\end{multline*} where $\alpha = (m_1 + m_2)a_1^2$, $\beta = m_2a_2^2$, $\eta = m_2a_1a_2$, $e_1 = g/a_1$, $g = 9.8$, $m_1 = 0.8$, $m_2 = 2.3$, $a_1 = 1$, and $a_2 = 1$. For the sets, we set $\X = \{[-\frac{\pi}{2}, -\frac{\pi}{2}, -\frac{\pi}{3}, -\frac{\pi}{3}] \leq x \leq [\frac{\pi}{2}, \frac{\pi}{2}, \frac{\pi}{3}, \frac{\pi}{3}]$, $\U = \{[0,0] \leq u \leq [0,0]\}$, $\X_0 = \{[\frac{\pi}{2}, 0, 0, 0] \leq x_0 \leq [\frac{\pi}{2}, 0, 0, 0]\}$, and $\X_{e0} = \{[-0.3,-0.3,0,0] \leq x_{e0} \leq [0.3,0.3,0,0]\}$.

\subsection{Experimental results on additional benchmarks}
Results comparing the performance of \toolname and other approaches on the additional $5$ models are shown in Fig.~\ref{fig:results_more} and Tab.~\ref{tab:results_more}. On some simple models including Pendulum, Dubin's car, Cart-pole, and TLPRA, both RL and \toolname achieve perfect results, and \toolname demonstrates lower variance. On the higher-dimensional system, Quadrotor2, \toolname outperforms RL with lower tracking error and lower variance. Among these models, the SoS-based method cannot handle Cart-pole since it does not satisfy the sparse assumption for the control matrix $B$ and TLPRA since the dynamics is too complicated and hard to be approximated with low-degree polynomials. We did not evaluate the SoS-based method on other models since the case-by-case hyper-parameters tuning is too time consuming without domain expertise on the models. For Cart-pole, Pendulum, and TLPRA, the MPC-PyTorch library had numerical errors. 

\begin{figure}[t]
\centering
    \includegraphics[width=0.45\textwidth,trim=0 0 0 0,clip]{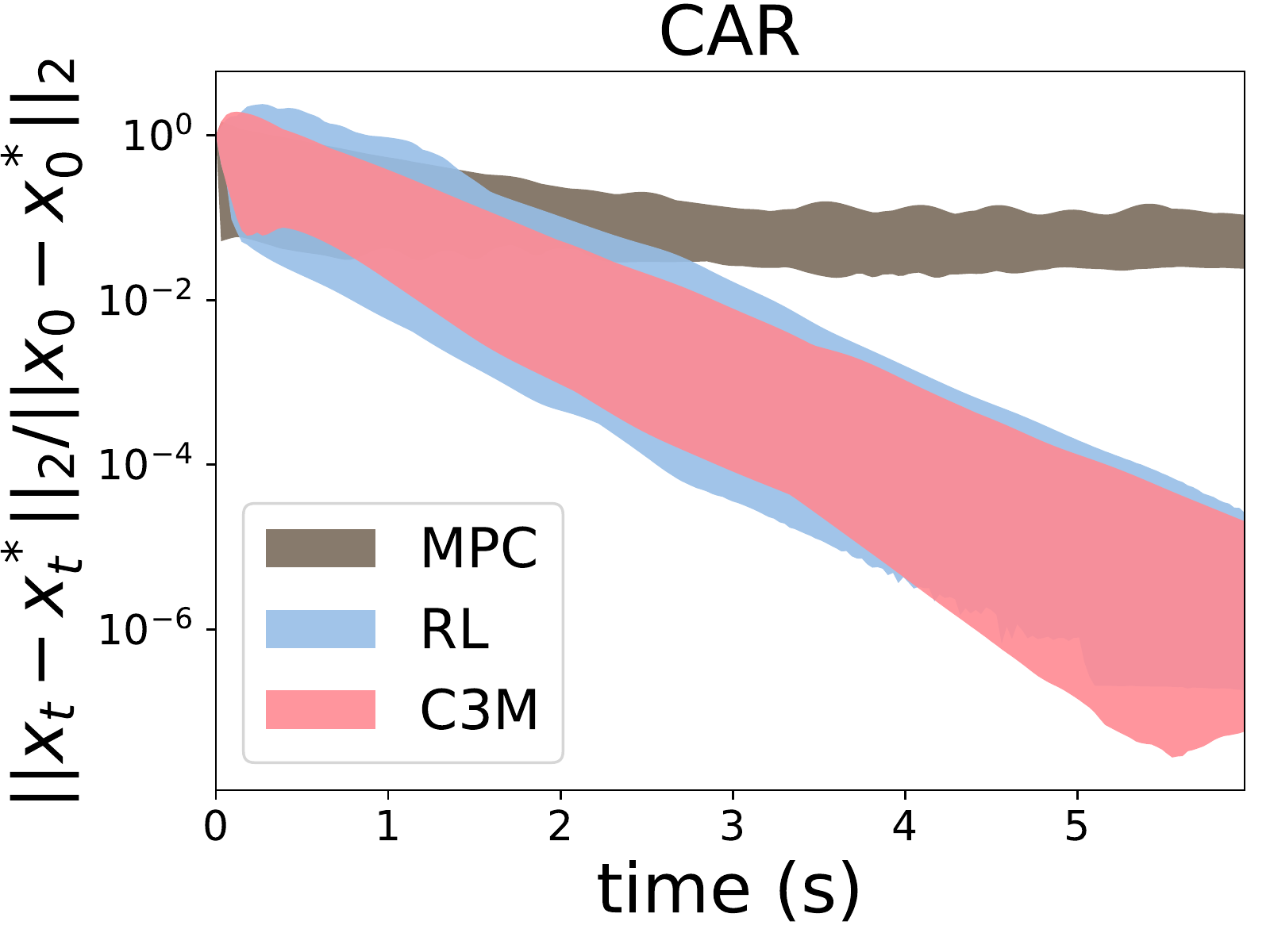}
    \includegraphics[width=0.45\textwidth,trim=0 0 0 0,clip]{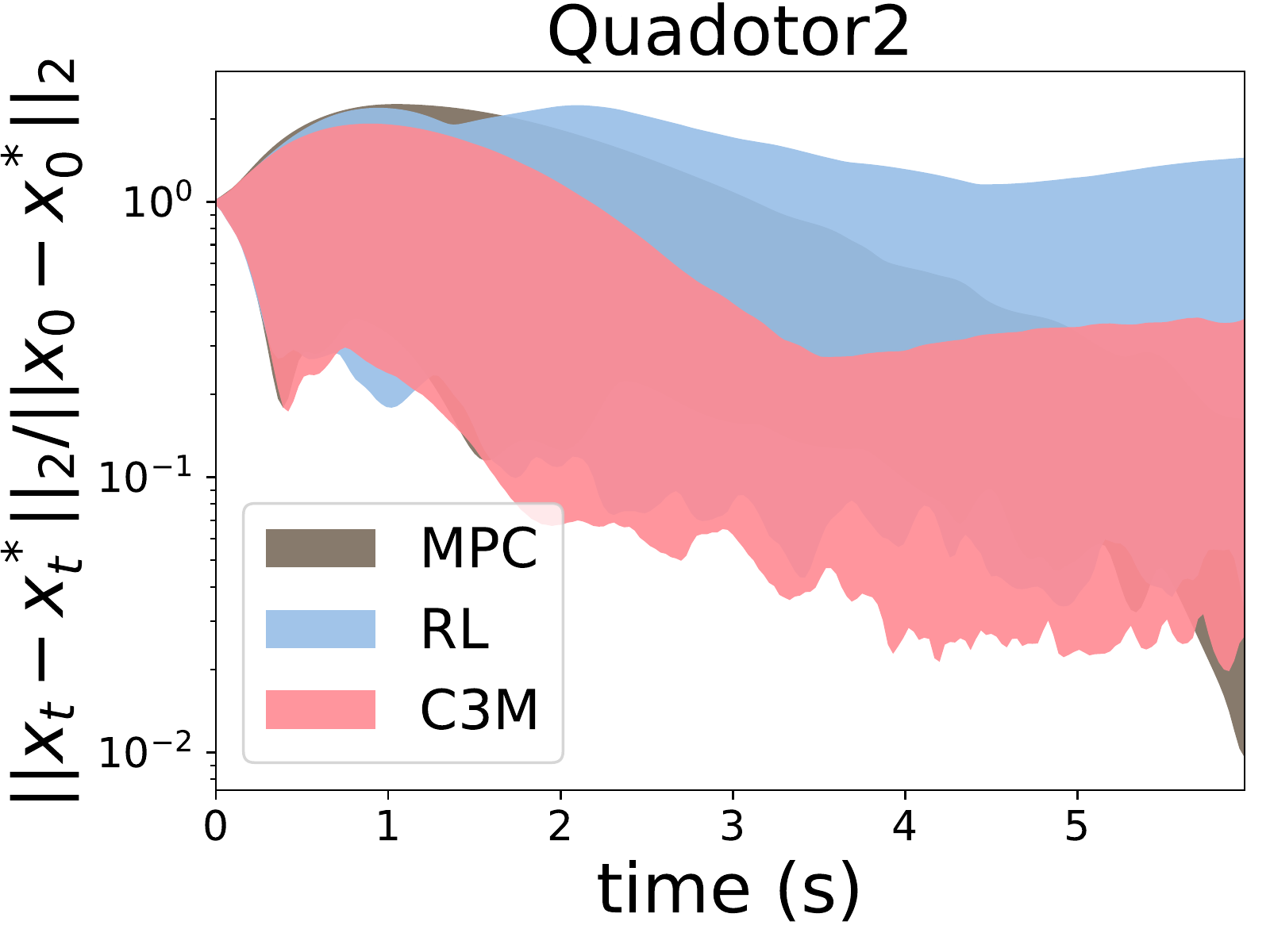}\\
    \includegraphics[width=0.45\textwidth,trim=0 0 0 0,clip]{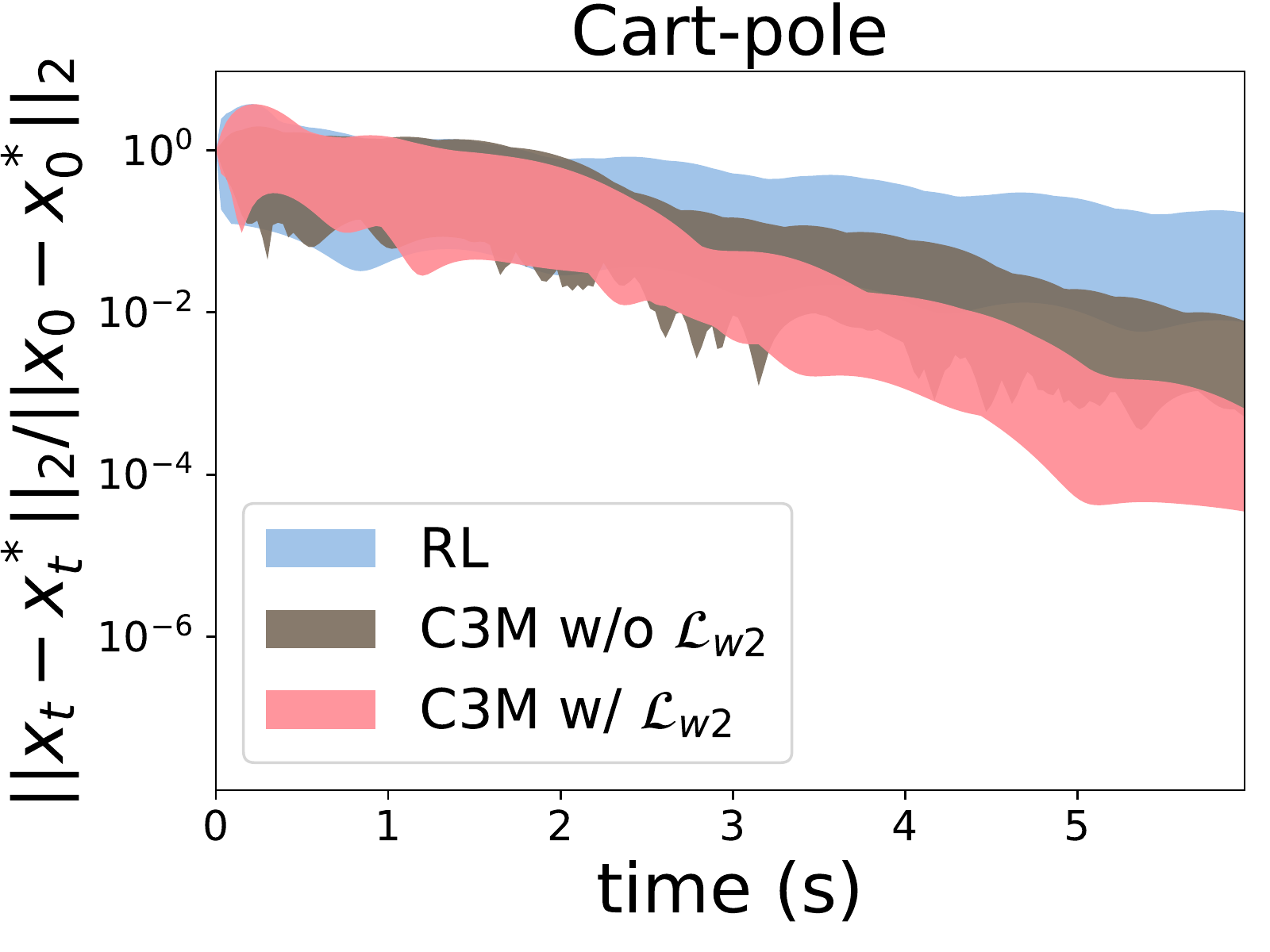}
    \includegraphics[width=0.45\textwidth,trim=0 0 0 0,clip]{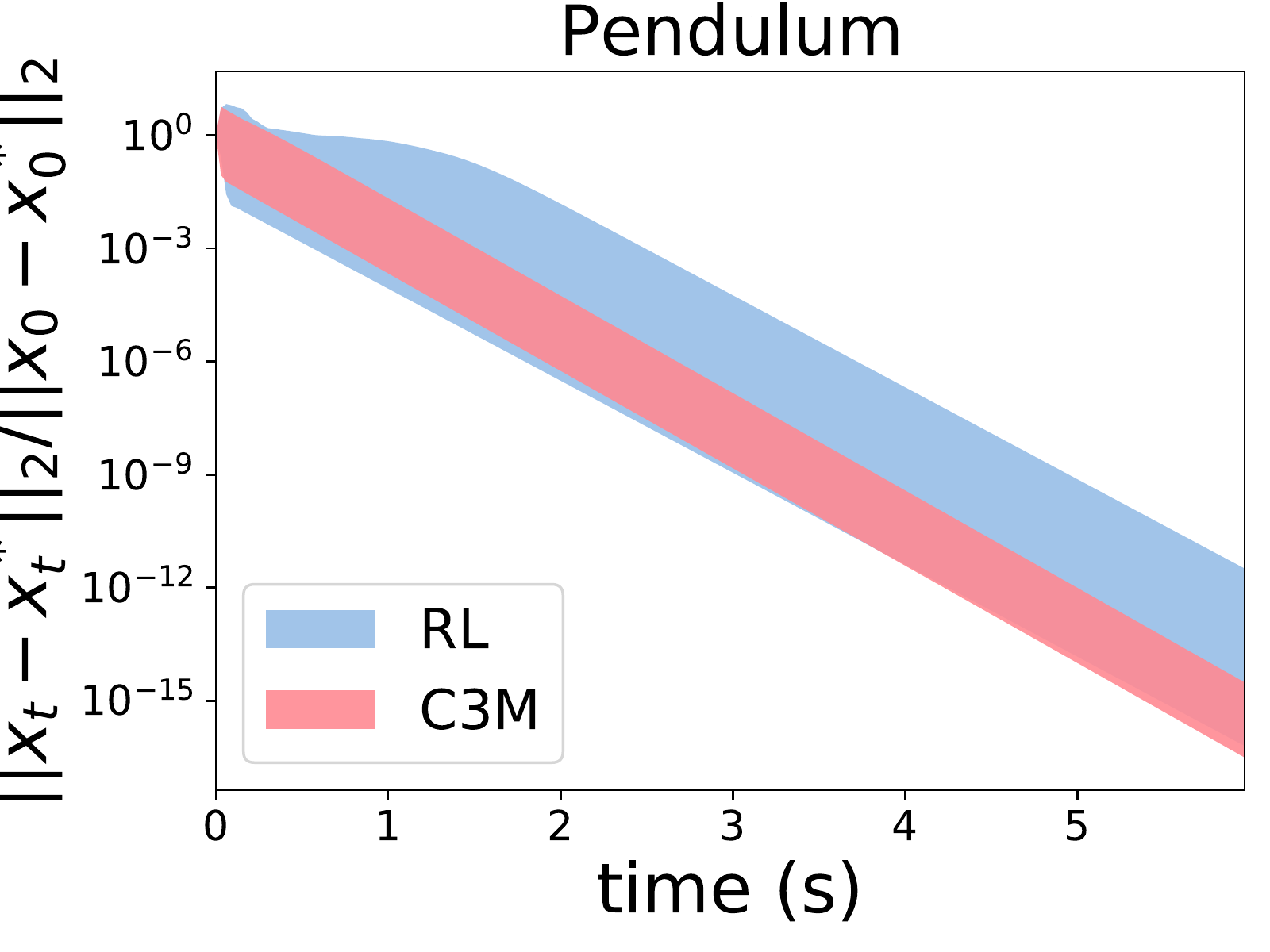}\\
    \includegraphics[width=0.45\textwidth,trim=0 0 0 0,clip]{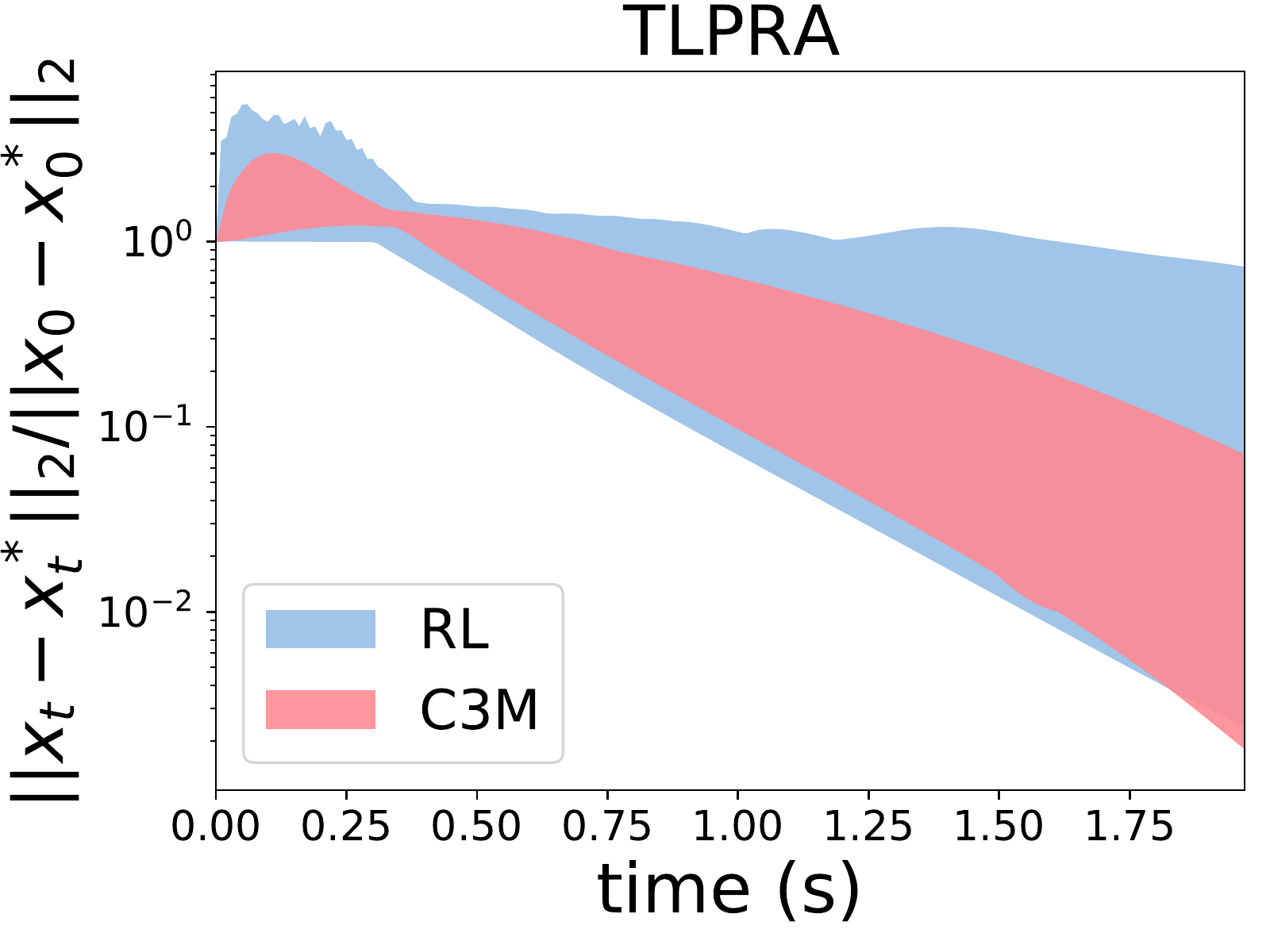}
\caption{\footnotesize Normalized tracking error on the benchmarks using different approaches. The $y$ axes are in log-scale. The tubes are tracking errors between mean plus and minus one standard deviation over 100 trajectories.}
\label{fig:results_more}
\end{figure}

\begin{table}[h!]
    \centering
    \resizebox{\columnwidth}{!}{
    \begin{tabular}{c|c|c|c|c|c|c|c|c|c|c|c}
         \toprule
         \multirow{2}{*}{Model} & \multicolumn{2}{c|}{Dim} & \multicolumn{3}{c|}{Execution time per step (ms)} & \multicolumn{3}{c|}{Tracking error (AUC)} &  \multicolumn{3}{c}{Convergence rate ($\lambda/C$)}\\
         \cline{2-12}
         & $n$ & $m$ & \toolname & MPC & RL & \toolname & MPC & RL & \toolname & MPC & RL\\
         \hline
         CAR & 4 & 2 & 0.46 & 2300 & 0.46 & 0.290 & 0.509 & 0.275 & 1.970/2.69 & 0.493/1.49 & 1.836/4.99\\
         Quadrotor2 & 10 & 3 & 0.45 & 7460 & 0.45 & 1.239 & 1.673 & 2.081 & 0.400/3.52 & 0.451/3.98 & 0.237/3.78\\
         Cart-pole & 4 & 1 & 0.45 & - & 0.45 & 0.865 & - & 1.207 & 1.100/4.85 & - & 0.509/4.10\\
         Pendulum & 2 & 1 & 0.37 & - & 0.37 & 0.091 & - & 0.092 & 5.772/6.82 & - & 2.675/9.46\\
         TLPRA & 4 & 2 & 0.42 & - & 0.42 & 1.082 & - & 1.086 & 2.035/4.91 & - & 1.883/6.82\\
         \bottomrule
    \end{tabular}
    }
    \caption{\footnotesize Comparison results of \toolname vs. other methods. Here, $n$ and $m$ are dimensions of the state and input space. The MPC library used in experiments failed to handle some models due to numerical issues (we got some \textbf{nan} errors).}
    \label{tab:results_more}
\end{table}
\subsection{Sampled trajectories under disturbance}
We show some plots of actual trajectories of the controlled systems in Fig.~\ref{fig:sampled_traj}. We simulate the controlled Dubin's car and Quadrotor2 system using different control strategy and with different levels of disturbances. In order to model a more realistic disturbance (e.g. wind), we do not adopt white noise as disturbance. Instead, $d(t)$ is a piece-wise constant function, where the length of each constant interval and the magnitude on each interval are uniformly randomly sampled from $[0, 1]$ (seconds) and $[0, \sigma]$ respectively. Since the controlled system is proved to be contracting using C3M, the tracking error is still bounded even under additive random disturbances. We can also see from Fig.~\ref{fig:sampled_traj} that even with disturbance, the trajectories of the closed-loop system using \toolname has less tracking error than the system with controllers using RL or MPC.

\begin{figure}[t]
\centering
    \includegraphics[width=0.32\textwidth,trim=0 0 0 0,clip]{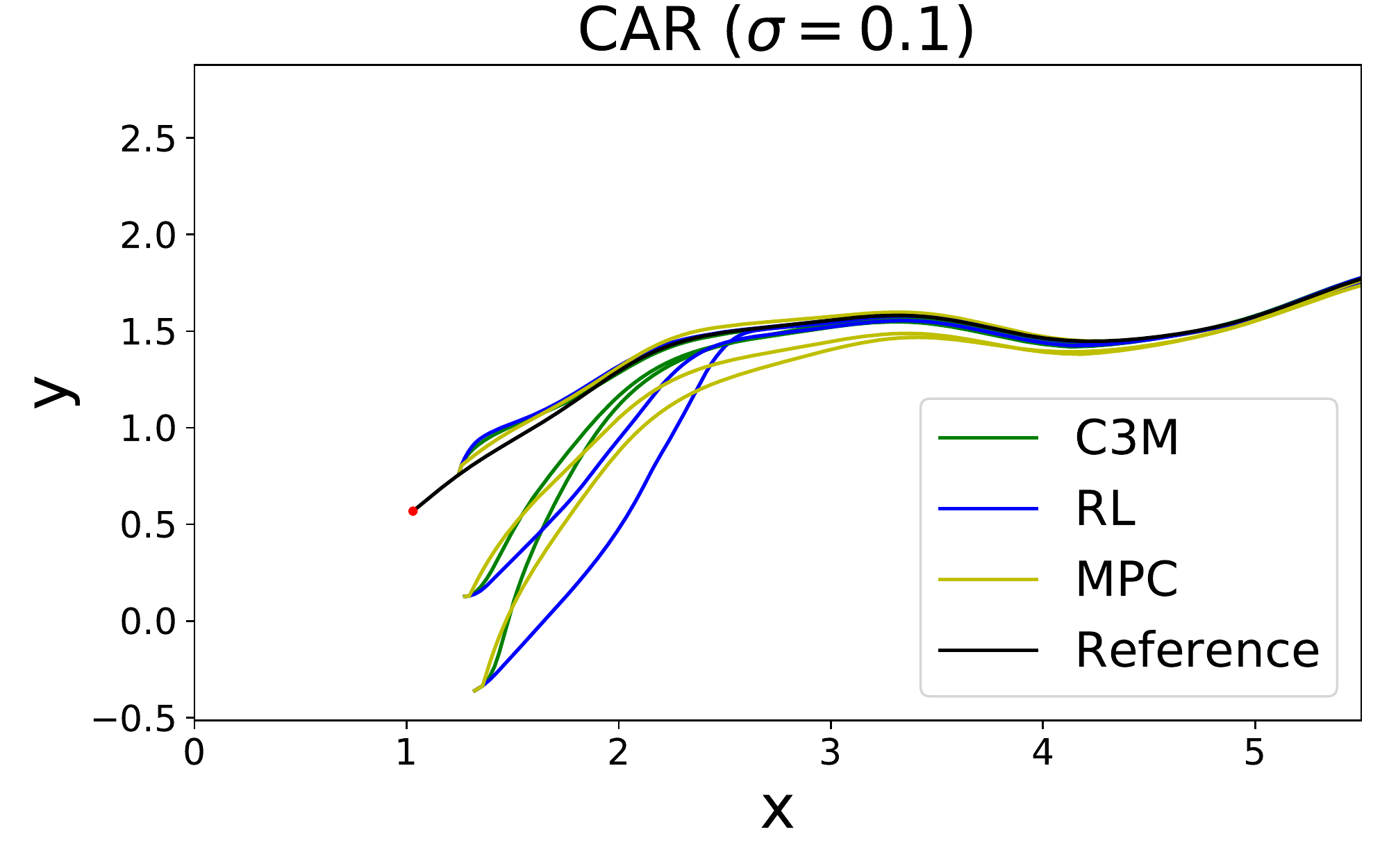} \includegraphics[width=0.32\textwidth,trim=0 0 0 0,clip]{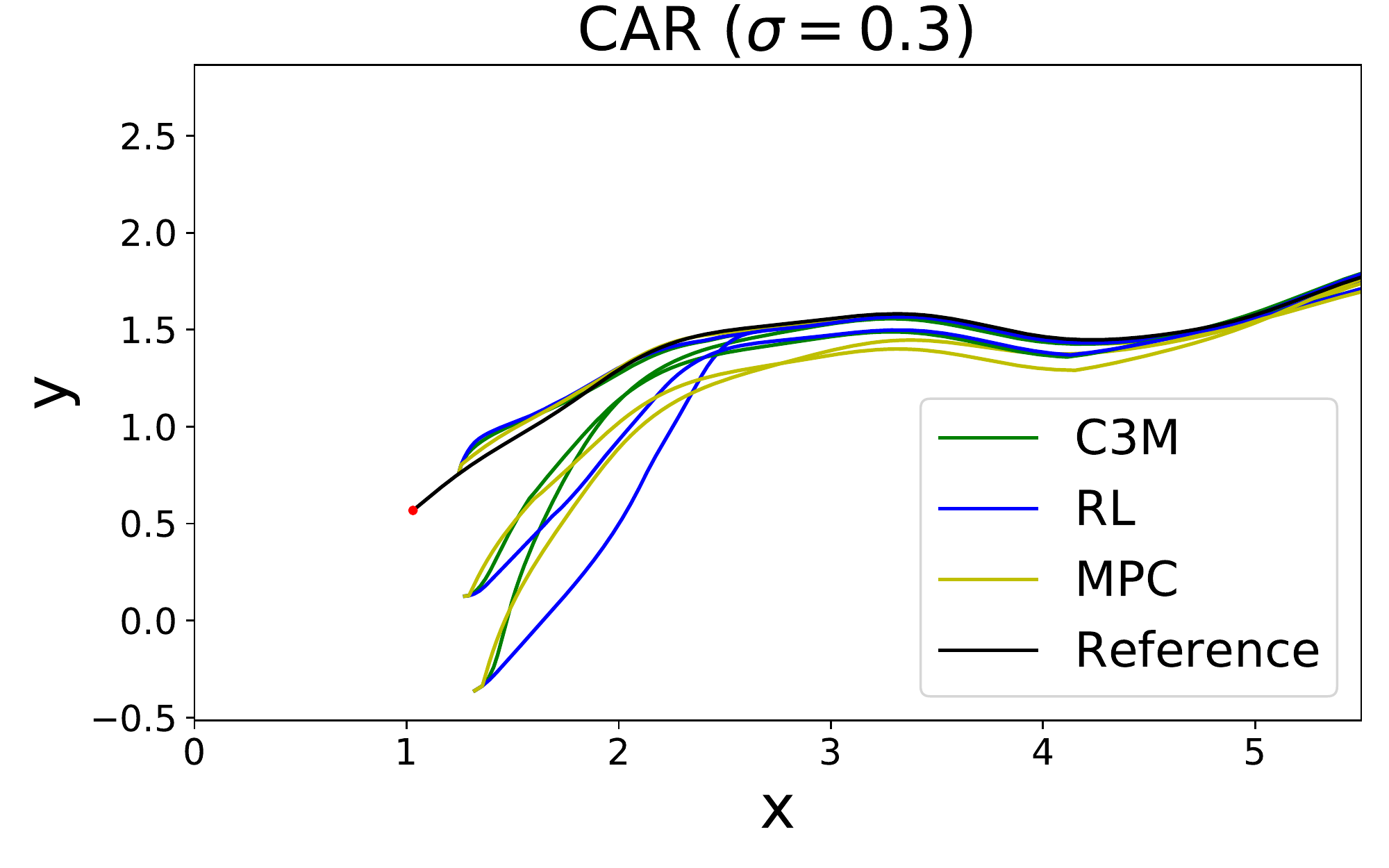} \includegraphics[width=0.32\textwidth,trim=0 0 0 0,clip]{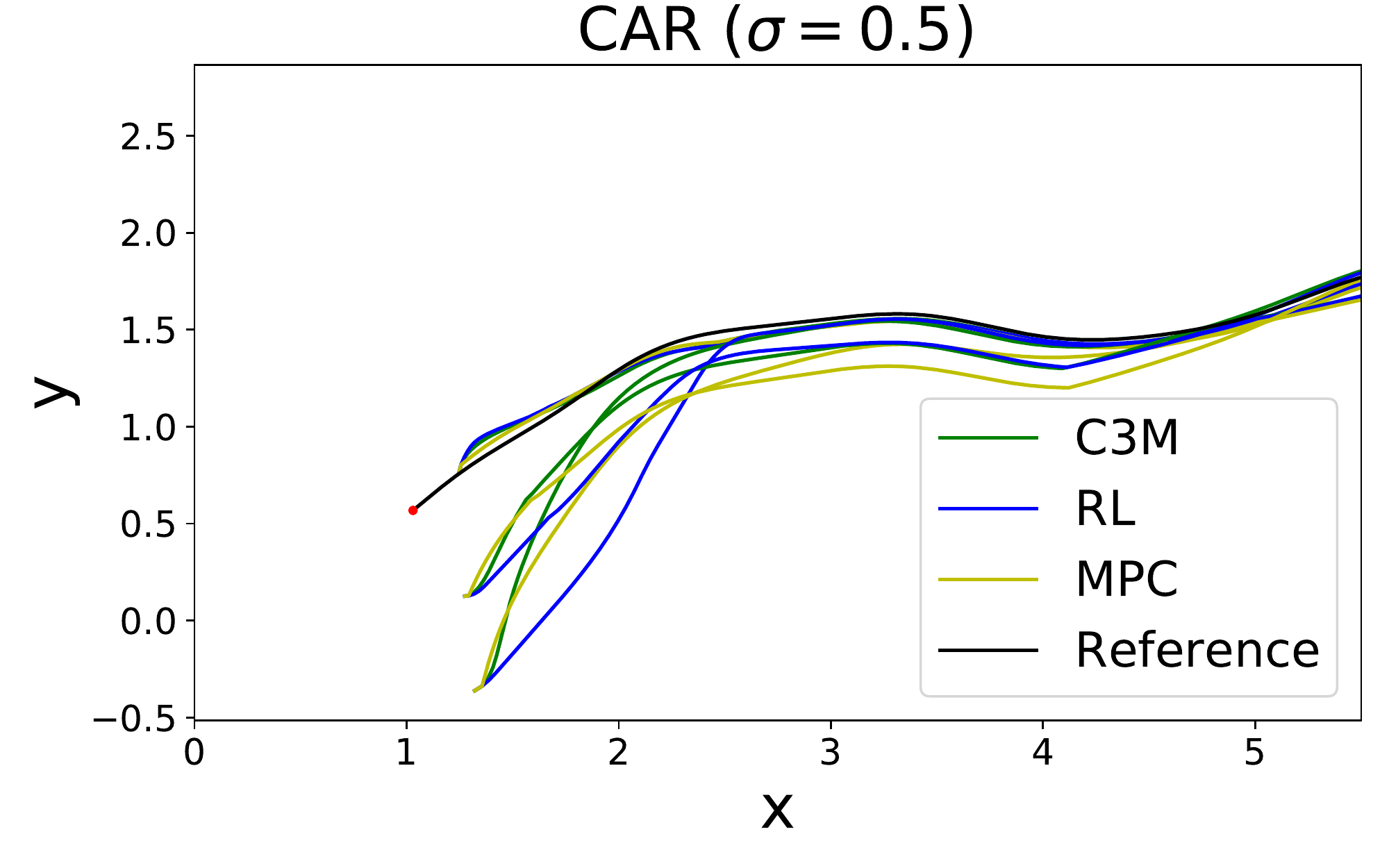}\\
    \includegraphics[width=0.32\textwidth,trim=0 0 0 0,clip]{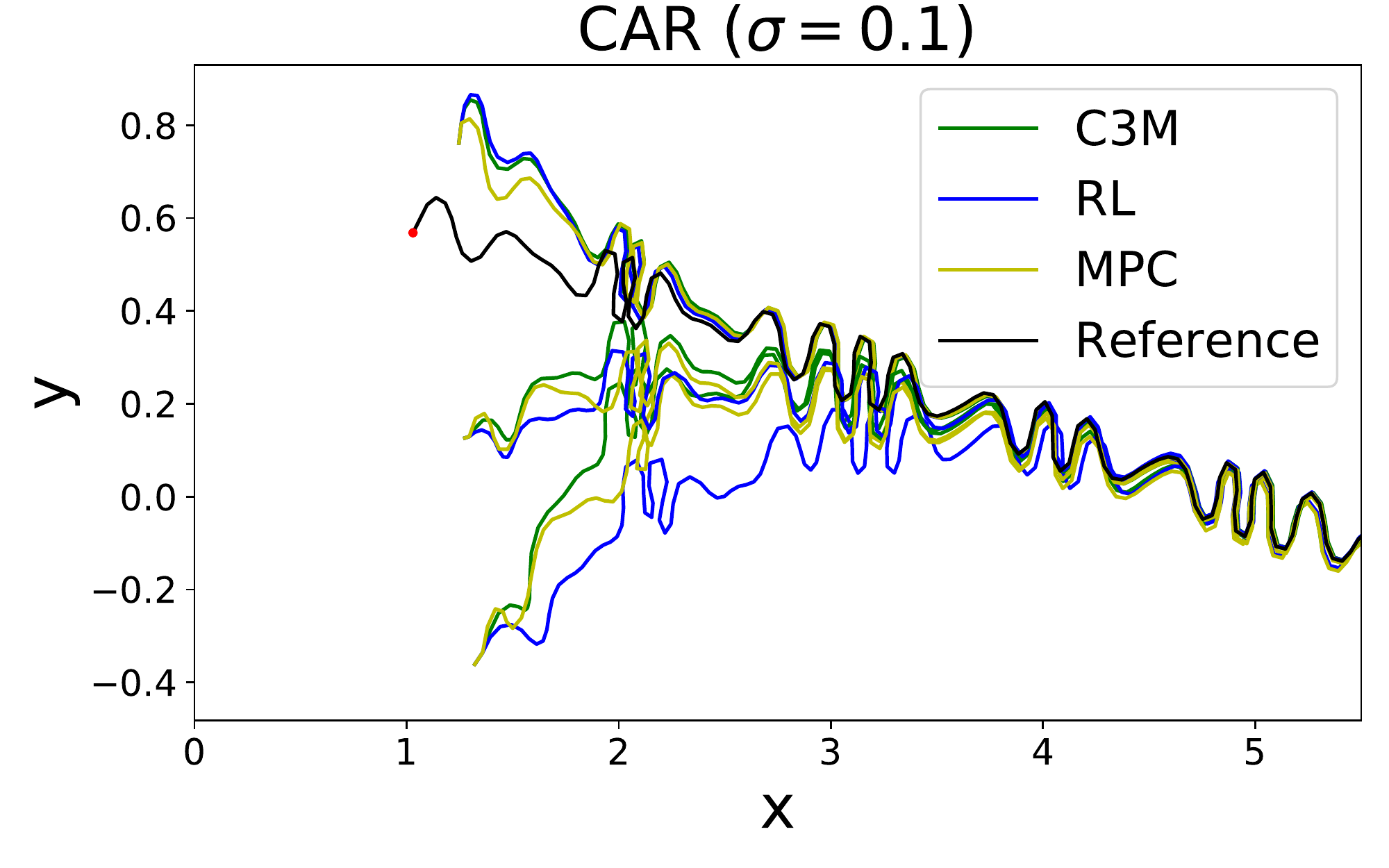} \includegraphics[width=0.32\textwidth,trim=0 0 0 0,clip]{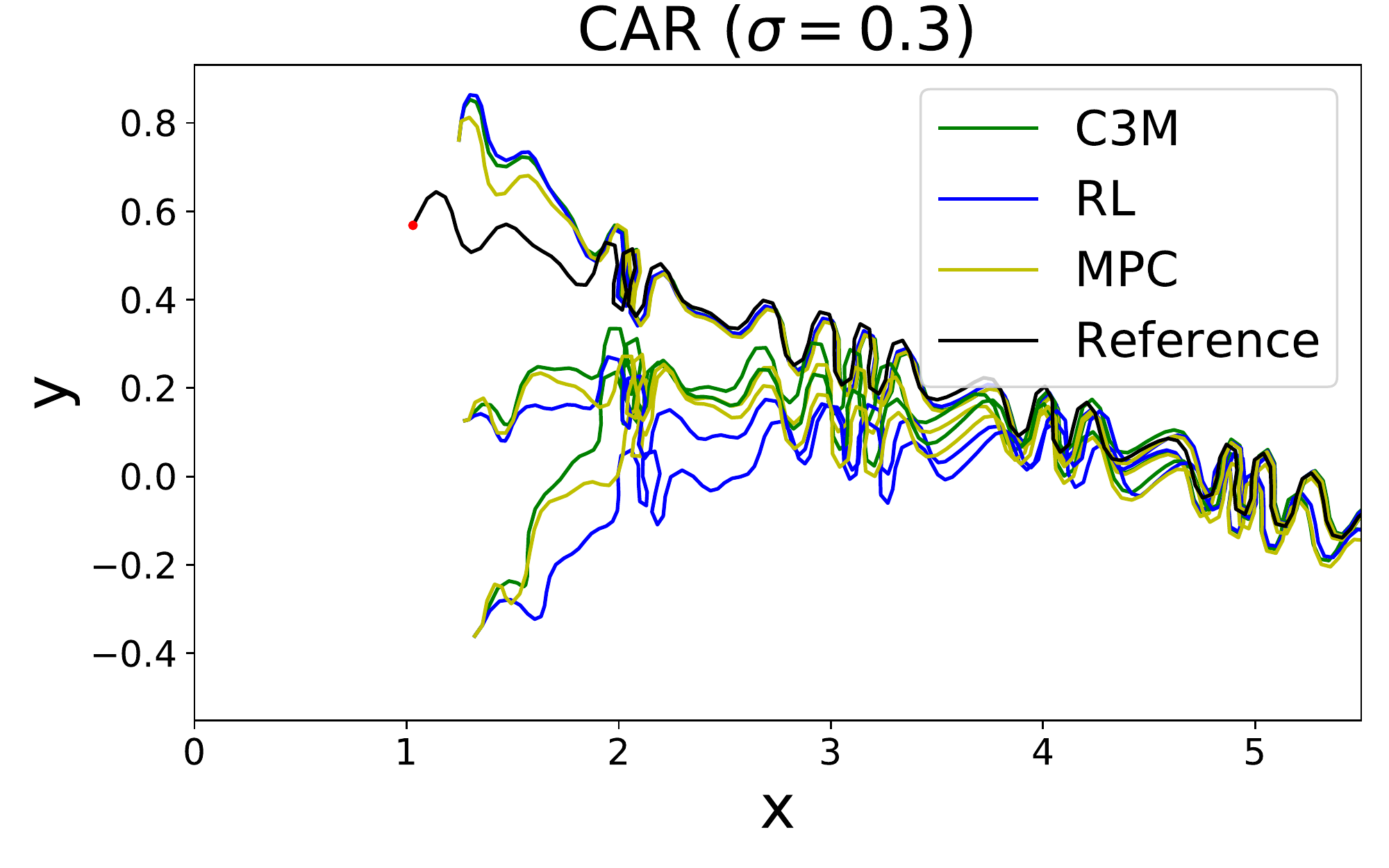} \includegraphics[width=0.32\textwidth,trim=0 0 0 0,clip]{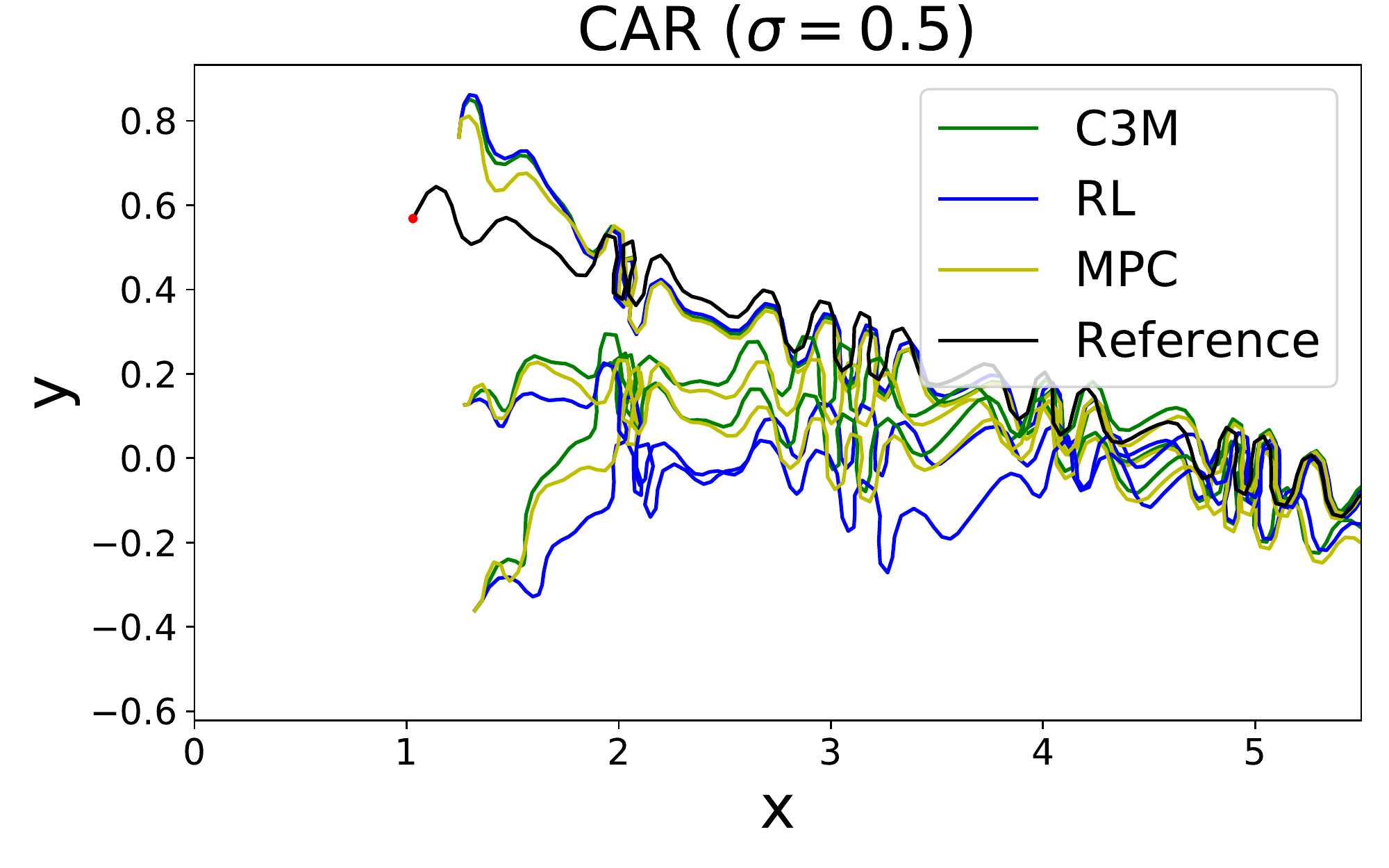}\\
    \includegraphics[width=0.32\textwidth,trim=0 0 0 0,clip]{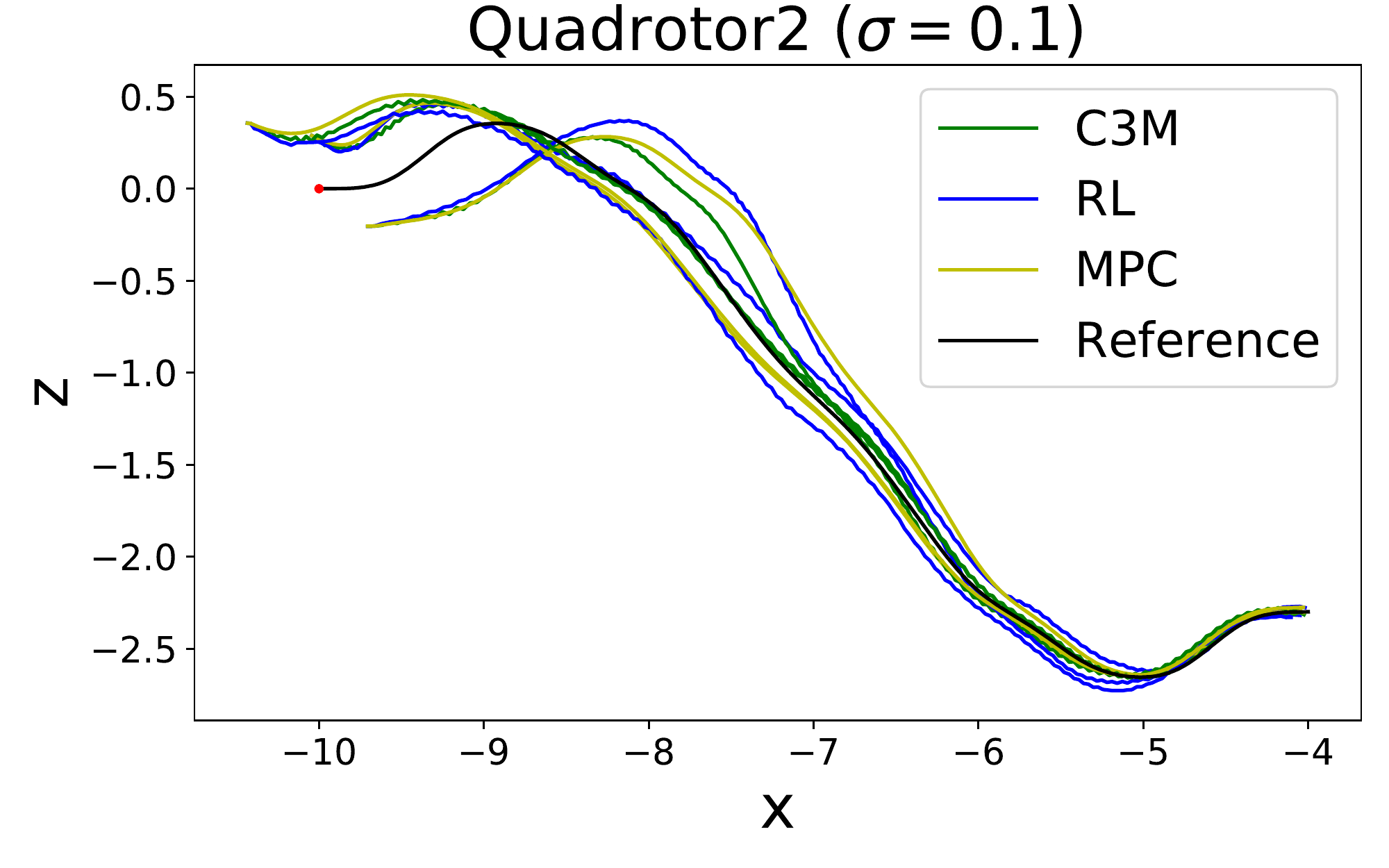} \includegraphics[width=0.32\textwidth,trim=0 0 0 0,clip]{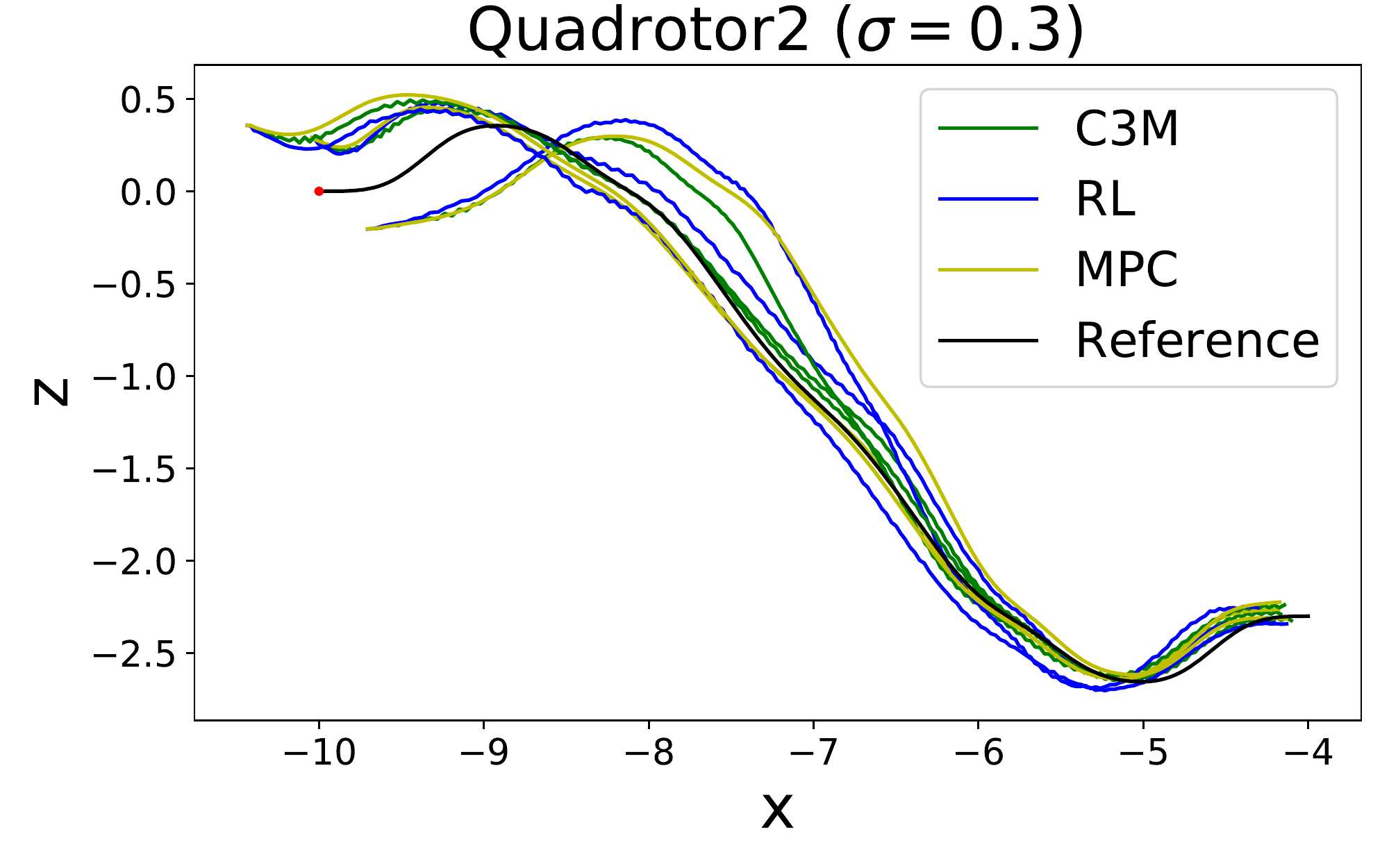} \includegraphics[width=0.32\textwidth,trim=0 0 0 0,clip]{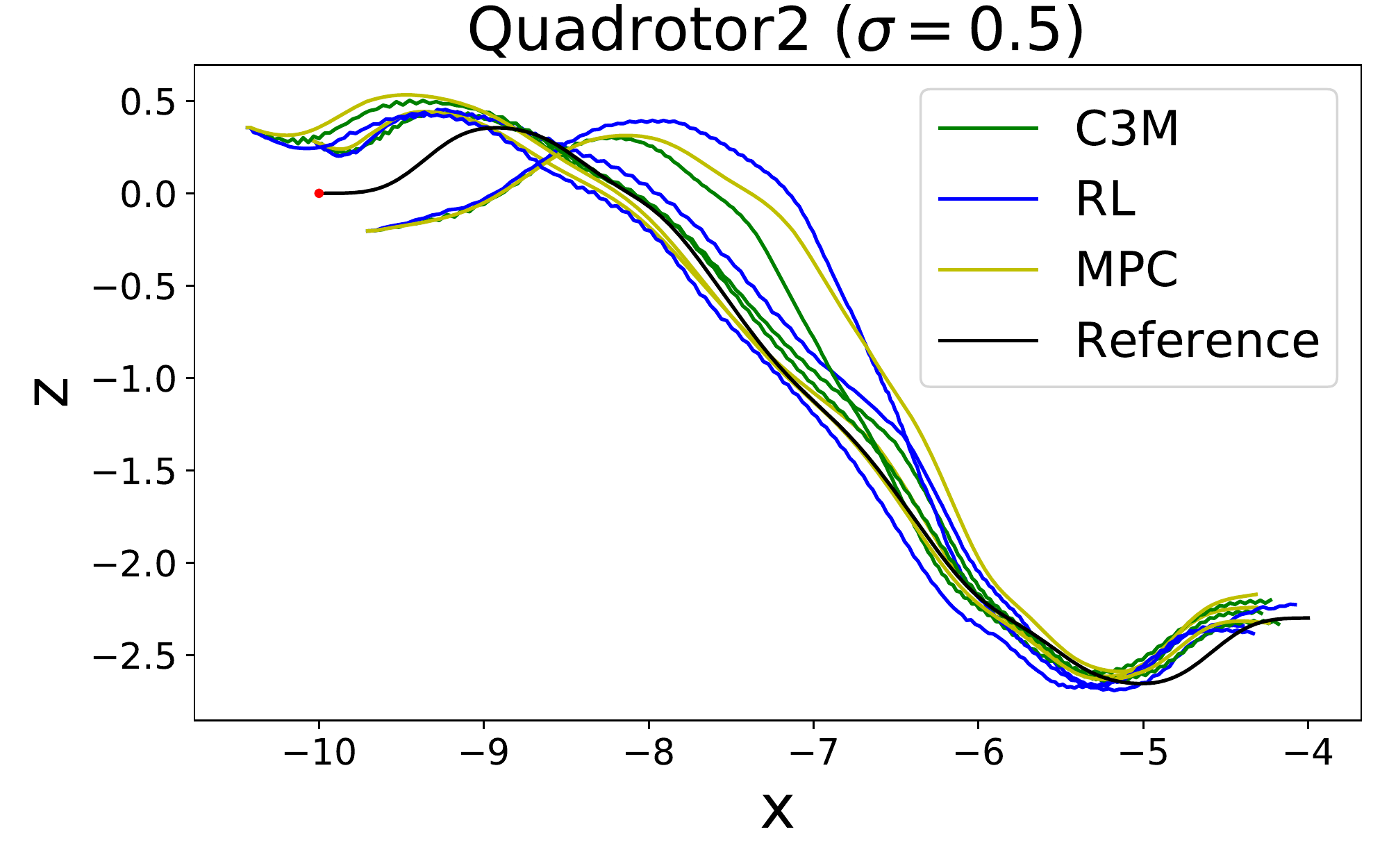}
\caption{\footnotesize Actual trajectories of the controlled systems in the presence of disturbances. Red dots indicate the starting points of the reference trajectories. Moreover, in order to show that the proposed method can work for any reference trajectories, in the second row, we use a reference control $u^*$ that consists of a very high-frequency component.}
\label{fig:sampled_traj}
\end{figure}

\section{More discussions on related work}
In the broad area of safe motion planning, driving the robots to some waypoints or follow some nominal trajectory without hitting obstacles in dynamically changing environments is a very challenging task, especially when robotics systems are nonlinear and nonholonomic. In this section, we provide more details on some related techniques in this area.

Sample-based planning methods including rapidly-exploring random trees (RRT)~\citep{lavalle1998rapidly}, probabilistic road maps (PRM)~\citep{kavraki1998analysis}, fast marching tree (FMT)~\citep{janson2015fast}, and many others~\citep{richter2016polynomial, karaman2011anytime, kobilarov2012cross}, plan safe open-loop trajectories by exploring the environment using samples. However, it is difficult to use sample-based planning to handle uncertainty and disturbances with safety guarantees.
In order to plan safe trajectories in the presence of disturbances, the uncertainty should be handled appropriately. The disturbances are usually assumed to be bounded, and the safety can be guaranteed by dealing with the worst case safely.

One may model safe motion planning problems as differential games, where the disturbances are modeled as an adversarial agent. Backward reachable sets and the corresponding optimal controller can be computed by solving Hamilton-Jacobi PDE using level-set methods~\citep{tomlin2003computational, gillula2010design, huang2011differential, herbert2017fastrack, fridovich2018planning, chen2018decomposition}. However, this class of methods suffers from poor scalablity. Solving Hamilton-Jacobi PDE for high-dimensional models is computationally expensive. Moreover, such high computational complexity limits their capability of solving online planning problems for dynamical environments. In comparison, our learned tracking controller can be executed in sub-millisecond level and therefore supports online planning. Moreover, because of the learned contraction metric, we can pre-compute the tracking error bound, which can be further used to understand how far the nominal trajectories should be from the obstacles. 

Barrier functions are another widely-used class of certificates for safety. A barrier function is a function of state whose time-derivative on the zero-level set is uniformly negative, which makes the zero-level set an invariant. In~\citep{prajna2007framework, barry2012safety}, barrier functions are computed for given controllers and any admissible disturbances and are used for verification of the closed-loop system. As an extension to controlled systems, control barrier functions~\citep{ames2014control, ames2019control, taylor2019learning} enable synthesis of safety-guaranteed controllers and thus have gained increasing popularity. In~\citep{ames2014control}, the authors jointly leveraged control barrier functions (CBF) and control Lyapunov functions (CLF) as safety and objective certificates respectively for control synthesis in the context of adaptive cruise control and lane keeping. However, finding valid CBF for general models is hard. Recently, the advances of machine learning have been utilized to help search for CBF~\citep{taylor2019learning}.

The idea of synthesizing a controller and bounding the tracking error meanwhile through pre-computation has gained increasing popularity. It enables safe motion planning for dynamically changing environments. Control Lyapunov functions can be used for this purpose~\cite{ravanbakhsh2016robust}. Learning-based CLF has also been studied in some recent works~\citep{choi2020reinforcement,khansari2014learning, robey2020learning}. However, in order to use CLF for synthesizing controller to track reference or nominal trajectories instead to driving the system to a fixed goal set, the original dynamics has to be converted into error dynamics depending on the reference trajectory, which limits the diversity of available reference trajectories. Similarly, funnel library methods have been studied in~\citep{majumdar2013robust, majumdar2017funnel}, where tracking controllers for a fixed set of reference trajectories and the corresponding tracking error are computed offline. In the online planning phase, the safety of the reference trajectories are examined using the funnels corresponding to them, and the safest one is chosen. Again, this class of methods lack diversity of available reference trajectories which is critical to safe motion planning in complicated environments. LQR-Trees~\cite{tedrake2009lqr} builds a tree of LQR controllers backward from the goal set such that the total contraction regions of the nodes cover the whole state space, however, it cannot handle scenarios where the environments are unknown until runtime. Tube Model Predicative Control (TMPC) is another class of related techniques, where a tracking controller is computed such that the actual trajectory remains in a tube centered at the planned MPC nominal trajectory in the presence of bounded disturbances. TMPC techniques have been studied for linear system in many works~\citep{langson2004robust, mayne2005robust, farina2012tube}. As for non-linear systems, linearization and Lipschitz-continuity-based reachibility analysis are used~\citep{kogel2015discrete, yu2013tube, mayne2011tube}. However, in the latter case, the tube are often  too conservative to be used in motion planning with limited freespace.

Contraction analysis~\citep{lohmiller1998contraction} is another series of methods for analyzing the incremental stability of systems. Recently, it has been extended to controlled system in~\citep{manchester2017control} and thus enabled tracking control synthesis for arbitrary reference trajectories with guaranteed bound on tracking error. The most challenging part in contraction analysis is the search for a valid contraction metric which entails solving Linear Matrix Inequalities (LMIs). In~\citep{aylward2008stability}, the authors proposed to solve this feasibility problem with Sum-of-Squares (SoS) programming. In~\citep{singh2019robust}, the authors extended the SoS-based method to controlled systems, i.e. search for a \textit{control} contraction metric (CCM) instead of an ordinary one, and proposed a more general method for synthesizing control given a valid CCM. However, in order to apply SoS-based methods, the dynamics of the system has to be represented by polynomials or can be approximated by polynomials. Furthermore, the method proposed in~\citep{singh2019robust} relies on an assumption on the structure which encodes the controllablity of the system. Different from ours, the method proposed in~\citep{singh2019robust} searches for a metric using the structural assumption and synthesize the controller
using geodesics, which entails heavy computation. Moreover, geodesics cannot be computed exactly, and thus the authors in~\citep{singh2019robust} resorted to the optimization-based method proposed in~\citep{leung2017nonlinear} as an approximation of the geodesics.

Combining machine learning and contraction theory has also been studied in some recent work. In~\citep{singh2019learning}, the authors proposed a learning-based method for identifying an unknown system with an additional constraint that such learned dynamics admit a valid CCM. In~\citep{tsukamoto2020neural}, the authors proposed a control synthesis method for nonlinear systems where the dynamics is written as a convex combination of multiple state-dependent coefficients (i.e. $f(x,t)$ written as $A(x,t)x$) and made use of RNNs to model a time-varying metric. Again, different from our method, the controller is not jointly learned with the metric. In contrast, our approach can simultaneously synthesize the controller and the CCM certificate without any additional structural assumptions for the system.

\end{document}